\documentclass[10pt]{article}

\usepackage{arxiv_twocol}

\usepackage{latexsym}

\usepackage[T1]{fontenc}

\usepackage[utf8]{inputenc}

\usepackage{microtype}

\usepackage{graphicx}
\usepackage{subcaption}
\usepackage{booktabs}
\usepackage{multirow}
\usepackage{makecell}
\usepackage{tabularx}
\usepackage{amsmath}
\usepackage{bm}
\usepackage{amsthm}

\newtheorem{proposition}{Proposition}
\newcommand{\Rmnum}[1]{\uppercase\expandafter{\romannumeral #1}}

\title{CoDe-LoRA: Mitigating the Orthogonality Dilemma in Continual Learning of LLMs via Knowledge Consolidation and Decoupling}

\author{
  Maoqi Liu$^{1}$,
  Quan Fang$^{1,\text{\Letter}}$,
  Yufei He$^{2}$ \\
  $^{1}$Beijing University of Posts and Telecommunications \\
  $^{2}$National University of Singapore
}

\begin{document}
\maketitle
\begingroup\def\thefootnote{\Letter}\footnotetext{Corresponding author: \texttt{qfang@bupt.edu.cn}.}\endgroup
\begin{abstract}
Continual learning (CL) is essential for Large Language Models (LLMs) to sequentially adapt to evolving tasks. To mitigate catastrophic forgetting, recent advances implement low-rank adaptation with orthogonal projections (e.g., O-LoRA) to isolate task parameters. 
However, we reveal that such strict geometric constraints trigger an ``Orthogonality Dilemma'': rigid parameter isolation impedes the transfer and accumulation of shared representations across semantically related tasks. 
In this work, we propose a new replay-free method, called \underline{Co}nsolidation and \underline{De}coupling \underline{LoRA} (CoDe-LoRA), for CL of LLMs. CoDe-LoRA disentangles the learning process into \textit{Consolidating Universal Knowledge} and \textit{Decoupling Task-Specific Knowledge}. To achieve this, CoDe-LoRA leverages an adaptive null space projection mechanism and semantic routing to balance knowledge accumulation with task-specific adaptation. Experimental results across four backbones and three CL benchmarks show that CoDe-LoRA achieves the best average accuracy. Our code is available at \url{https://github.com/Estrellajer/CoDe-LoRA}.
\end{abstract}


\begin{figure*}[!ht]
  \centering
  \includegraphics[width=0.98\textwidth]{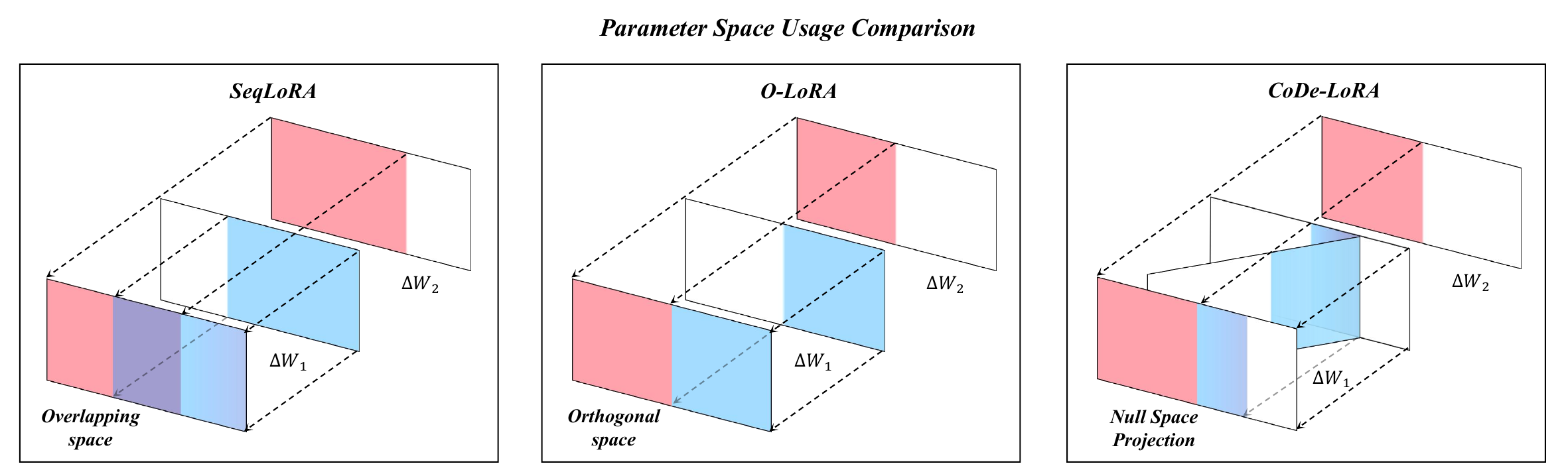}
  \caption{Conceptual illustration of parameter space utilization. Traditional LoRA suffers from interference, while O-LoRA's strict orthogonality hinders knowledge sharing. CoDe-LoRA leverages null space projection to consolidate shared knowledge.}
  \label{teaser}
\end{figure*}

\section{Introduction}

Large Language Models (LLMs) have redefined the boundaries of natural language processing, exhibiting remarkable capabilities across a wide range of tasks~\cite{wei2021finetuned}. However, adapting these models to a continuous stream of evolving tasks, known as \textit{Continual Learning (CL)}~\cite{de2021continual}, remains challenging. While parameter-efficient fine-tuning (PEFT) methods, particularly low-rank adaptation (LoRA)~\cite{hu2021lora}, have significantly lowered the cost of adaptation, naively applying them to sequential tasks leads to \textit{catastrophic forgetting}~\cite{li2017learning}, where learning new tasks degrades previously acquired capabilities on earlier tasks.

To mitigate forgetting, recent work combines PEFT with orthogonal projection strategies~\cite{saha2021gradient, wang2023orthogonal}. A representative example is O-LoRA~\cite{wang2023orthogonal}, which isolates task-specific parameters in mutually orthogonal subspaces. Although effective at reducing inter-task interference, such rigid isolation introduces a trade-off between interference mitigation and knowledge transfer. When sequential tasks share semantic structure, their optimal parameter updates should naturally overlap. Enforcing strict orthogonality removes these shared components, reducing representation reuse and weakening cross-task transfer. As illustrated in Figure~\ref{teaser}, this phenomenon reveals an ``Orthogonality Dilemma'': the same geometric constraint that protects previous knowledge can also restrict transferable representation learning across related tasks.

This limitation highlights a key challenge in continual adaptation: how to preserve prior knowledge while still accumulating transferable representations across related tasks. Motivated by this tension, we investigate two questions:
\begin{itemize}
    \item[\textbf{Q1.}] \textit{Why does strict orthogonality limit generalization across related tasks, and under what conditions is this limitation most severe?}
    \item[\textbf{Q2.}] \textit{How can we design a parameter-efficient framework that preserves transferable representations while maintaining task-specific expertise in a replay-free setting?}
\end{itemize}

In this paper, we introduce \underline{Co}nsolidation and \underline{De}coupling \underline{LoRA} (\textbf{CoDe-LoRA}), a dual-branch continual adaptation framework motivated by the orthogonality-induced transfer limitation. We provide a geometric characterization showing that strict orthogonal isolation becomes increasingly detrimental as task similarity grows.

CoDe-LoRA separates transferable representation consolidation from task-specific specialization through two complementary branches. Unlike strict orthogonal isolation methods that suppress overlapping representations entirely, our framework preserves transferable components while constraining interference-prone directions. Specifically, the \textit{Consolidation Branch (Co-LoRA)} maintains shared transferable representations through adaptive null-space projection with dynamic scaling to stabilize long-horizon continual adaptation, as empirically verified in Appendix~\ref{sec:appendix:scaling} (Table~\ref{tab:scaling_empirical}). Meanwhile, the \textit{Decoupling Branch (De-LoRA)} maintains a pool of independent LoRA experts together with a prototype-based semantic routing mechanism for robust inference without explicit task identifiers. Importantly, CoDe-LoRA operates without relying on historical data replay, avoiding additional storage overhead and privacy concerns.

Our main contributions are summarized as follows:
\begin{itemize}
\item We identify and analyze the \textbf{Orthogonality Dilemma} in PEFT-based continual learning, showing that strict parameter isolation becomes increasingly detrimental as task similarity grows.
\item We propose \textbf{CoDe-LoRA}, a dual-branch continual adaptation framework that separates transferable representation consolidation from task-specific specialization through adaptive null-space projection and prototype-based expert decoupling.
\item Extensive experiments across four backbones and three continual learning benchmarks show that CoDe-LoRA achieves the best average accuracy while balancing knowledge transfer and interference mitigation.
\end{itemize}

\begin{figure*}[!ht]
  \centering
  \includegraphics[width=\textwidth]{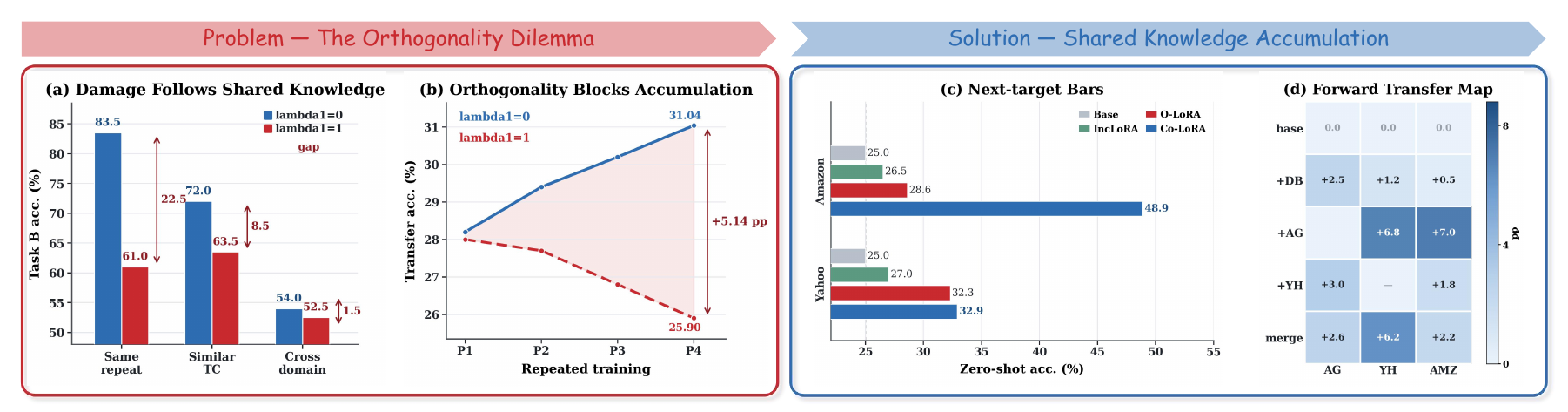}
    \caption{
    Preliminary evidence for the orthogonality dilemma and transferable representation accumulation.
    \textbf{(a)} Orthogonality-induced degradation under different task similarity levels.
    \textbf{(b)} Transfer trajectories over repeated training with and without orthogonal constraints.
    \textbf{(c)} Next-task zero-shot transfer comparison across continual adaptation methods.
    \textbf{(d)} Forward transfer dynamics across sequential tasks.
    }
  \label{fig:empirical}
\end{figure*}

\section{Related Work}
\label{sec:related_work}

\subsection{Parameter-Efficient Fine-Tuning}

Parameter-Efficient Fine-Tuning (PEFT) adapts Large Language Models (LLMs) with minimal additional parameters. Early approaches such as Adapters~\cite{houlsby2019parameter} and Prompt/Prefix Tuning~\cite{li2021prefix, lester2021power} introduced lightweight modules while freezing the backbone. LoRA~\cite{hu2021lora} further improves efficiency through low-rank matrices. Although effective for static tasks, sequential adaptation with PEFT methods often leads to catastrophic forgetting of earlier tasks as training proceeds.

\subsection{Continual Fine-Tuning for LLMs}

Continual fine-tuning methods mainly rely on regularization~\cite{kirkpatrick2017overcoming}, replay~\cite{qin2021lfpt5, he2024seekr}, or parameter expansion~\cite{razdaibiedina2023progressive}. Expansion-based methods avoid interference through task-specific parameters, but their parameter cost grows with the number of tasks. Replay and regularization methods preserve prior knowledge through memory buffers or parameter constraints, yet often incur computational overhead or privacy concerns. Recently, mixture-of-experts (MoE) routing frameworks such as MoLE-CIE~\cite{wang2025mixture} have been introduced for continual adaptation; however, they typically still rely on historical exemplar replay to prevent routing drift and maintain stable performance over long task sequences and domains.

More recently, orthogonality-based methods such as O-LoRA~\cite{wang2023orthogonal}, CLoRA~\cite{Lu2024ControlledLA}, and N-LoRA~\cite{yang2024parameter} mitigate forgetting by projecting updates into orthogonal or null subspaces. Although effective at reducing interference, strict orthogonal isolation may limit representation reuse across related tasks. In contrast, CoDe-LoRA separates shared representation consolidation from task-specific specialization, balancing knowledge transfer and interference mitigation through adaptive null-space projection without historical data replay.

\section{Preliminary}
\label{sec:prelim}

\subsection{Problem Definition}

We consider continual learning (CL) over a sequence of tasks $\{\mathcal{T}^{(1)}, \mathcal{T}^{(2)}, \ldots, \mathcal{T}^{(T)}\}$. Each task $\mathcal{T}^{(t)}$ is associated with a dataset $\mathcal{D}^{(t)} = \{(x, y)\}$, where $x$ denotes the input prompt and $y$ the target sequence. Under the PEFT setting, the backbone parameters $\theta_{\text{base}}$ remain frozen while task-specific updates $\Delta \theta^{(t)}$ are learned with a small number of trainable parameters. We consider the replay-free setting, where no data from earlier tasks is stored, and task identity is not given at inference time. Sequentially fine-tuning on new tasks often leads to catastrophic forgetting of previous ones.

\subsection{Orthogonal Continual Adaptation}

To mitigate forgetting, O-LoRA~\cite{wang2023orthogonal} constrains each task update to be orthogonal to previous task subspaces. For task $t$, the LoRA update is parameterized as $\Delta W_t = B_t A_t$, where $A_t \in \mathbb{R}^{r \times d_{\text{in}}}$ and $B_t \in \mathbb{R}^{d_{\text{out}} \times r}$. Orthogonality is imposed through:
\begin{equation}
    A_i A_t^\top = 0, \quad \forall i < t
\end{equation}
In practice it is enforced by a regularization penalty.

\subsection{Orthogonality Dilemma}
\label{sec:dilemma}

Although orthogonal constraints reduce interference, they may also suppress transferable representations across closely related tasks. Let $\Delta W_t^*$ be the update that unconstrained fine-tuning learns on task $t$, and let $P_{<t}$ be the orthogonal projector onto the row space of $A_{<t}=[A_1;\ldots;A_{t-1}]$. We measure how much of this update is aligned with earlier tasks by
\begin{equation}
    \rho_t = \frac{\|\Delta W_t^* P_{<t}\|_F^2}{\|\Delta W_t^*\|_F^2} \in [0,1].
    \label{eq:rho}
\end{equation}
Enforcing $A_iA_t^\top = 0$ for all $i<t$ forces $\Delta W_t P_{<t}=0$, so the closest feasible update is $\Delta W_t^*(I-P_{<t})$, which discards exactly a $\rho_t$ fraction of the update energy. The more related the tasks, the more reusable knowledge is lost.

This creates an ``Orthogonality Dilemma'': the constraint that protects previous knowledge also limits transfer between related tasks.

\subsection{Preliminary Evidence and Mechanism Validation}

Figure~\ref{fig:empirical} demonstrates these dynamics: (a) orthogonality-induced degradation worsens as task similarity grows; (b) unconstrained fine-tuning accumulates transferable knowledge over iterations, whereas strict orthogonality suppresses it; (c) preserving shared representations improves zero-shot transfer; and (d) transfer gains remain positive across task sequences.

\paragraph{Mechanism Validation.}
To investigate the underlying cause, we define task similarity as the cosine between task centroids, $\text{sim}(\mathcal{T}^{(i)}, \mathcal{T}^{(j)}) = c(\mathcal{T}^{(i)}) \cdot c(\mathcal{T}^{(j)})$, where $c(\mathcal{T})$ is the L2-normalized mean of mask-pooled, L2-normalized frozen T5-large embeddings of 200 training inputs, and measure update overlap by $\rho_t$ (Eq.~\ref{eq:rho}) on each task's saved low-rank update. We compare O-LoRA (rank 10) with and without its orthogonality penalty, holding the Order-1 data, schedule, and seed fixed. The penalty ($\lambda=0.5$) suppresses the prior-subspace energy from 5.05--5.63\% to 0.47--0.64\% (89.0\% on average), while removing it ($\lambda=0$) raises the mean next-task forward transfer from 43.70 to 49.45 (+5.75). This confirms that strict orthogonality suppresses update components that contribute to transfer, motivating our dual-branch design.

\begin{figure*}[!ht]
  \centering
  \includegraphics[width=\textwidth]{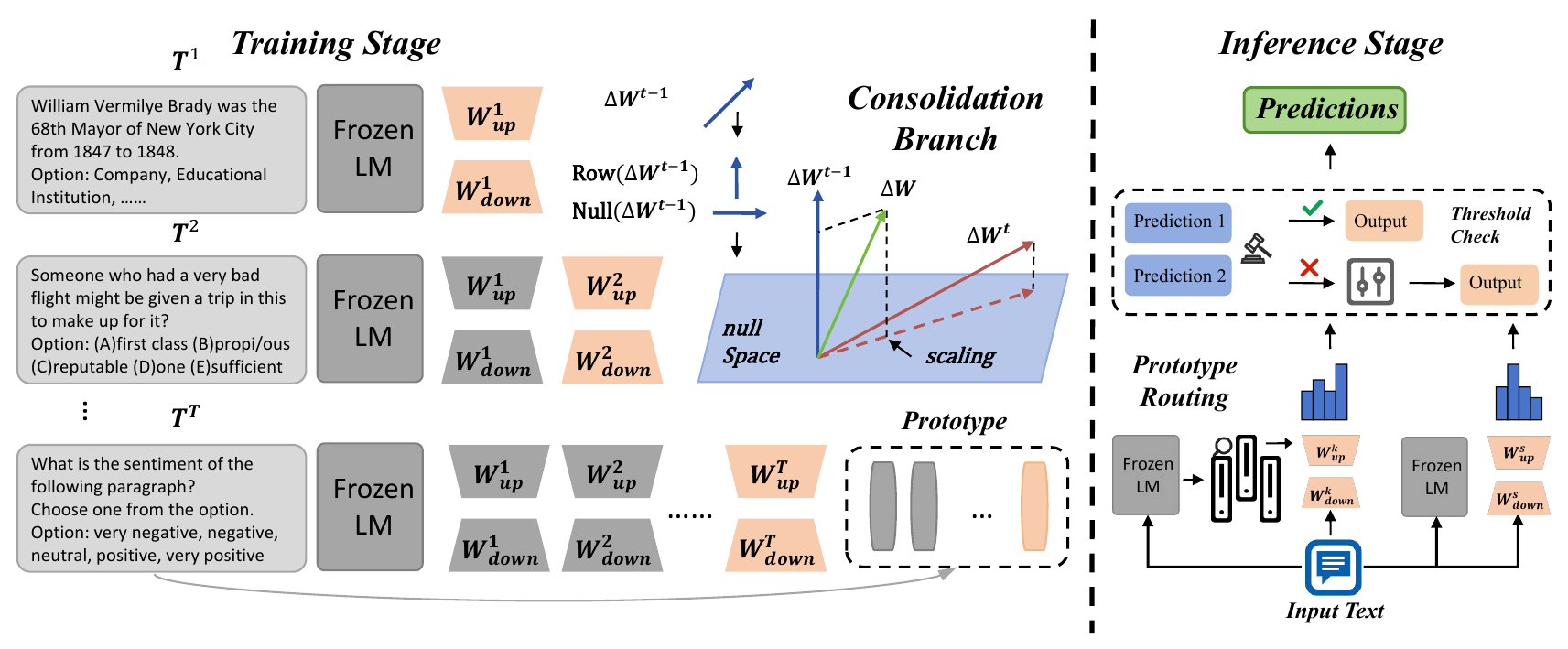}
  \caption{The CoDe-LoRA framework. (Left) Training: Knowledge is disentangled into task-specific experts and a shared branch consolidated via null space projection. (Right) Inference: Inputs are routed to experts via prototype similarity, with a confidence-based fallback to the shared branch for robust prediction.}
  \label{framework}
\end{figure*}

\section{Methodology}
\label{sec:method}

\subsection{Overall Framework}
\label{sec:overview}

To resolve the orthogonality dilemma (Section~\ref{sec:dilemma}), we propose CoDe-LoRA (illustrated in Figure~\ref{framework}), a dual-branch framework designed to consolidate shared representations while decoupling task-specific expertise. Built upon a frozen backbone $\theta_{\text{base}}$, our architecture operates through two parallel pathways: the Consolidation Branch (Co-LoRA, $\theta_{\text{shared}}$), which accumulates transferable representations across tasks via null space projection (Sec.~\ref{sec:consolidation}), and the Decoupling Branch (De-LoRA), which maintains a pool of independent task-specific LoRA experts $\{\theta_{\text{spec}}^k\}_{k=1}^T$ that captures domain nuances without interference (Sec.~\ref{sec:routing}). These components are ultimately integrated via a confidence-based fallback selection strategy (Sec.~\ref{sec:ensemble}) to ensure robust inference.

\subsection{Consolidating Shared Representations}
\label{sec:consolidation}

While strict parameter isolation (e.g., O-LoRA) prevents catastrophic forgetting, it often hinders the accumulation of shared representations across closely related tasks. Unlike strict orthogonal isolation methods that entirely discard overlapping update components, our consolidation branch preserves accumulated shared representations while constraining only newly introduced conflicting update directions. To achieve this objective, we propose a consolidation mechanism based on null space projection and dynamic scaling.

\paragraph{Consolidation Algorithm.}
After training on each new task $\mathcal{T}^{(t)}$, we update the parameters of the Consolidation Branch (Co-LoRA) through a three-step consolidation process. The Consolidation Branch is parameterized by low-rank weights $W_{\text{shared}} = B_{\text{shared}} A_{\text{shared}}$. Let $W_{\text{acc}}^{(t-1)}$ denote the accumulated shared weight matrix of Co-LoRA up to task $t-1$, and let $\Delta W_{\text{shared}}^{(t)}$ represent the parameter update learned in the Consolidation Branch during the training of task $t$.

\paragraph{Null Space Projection.} We project $\Delta W_{\text{shared}}^{(t)}$ onto the null space of the existing representations represented by $W_{\text{acc}}^{(t-1)}$. Specifically, we perform Singular Value Decomposition (SVD) on $W_{\text{acc}}^{(t-1)} \in \mathbb{R}^{d_{\text{out}} \times d_{\text{in}}}$ to obtain its top-$d_v$ left singular vectors $U \in \mathbb{R}^{d_{\text{out}} \times d_v}$, which form an orthonormal basis for its column space. Here, $d_v \le r$ is the rank of the accumulated column space (we use $d_v = r$). The new component $\Delta W_{\text{null}}$ is then computed by removing the projection of $\Delta W_{\text{shared}}^{(t)}$ onto this subspace:
\begin{equation}
\Delta W_{\text{null}} = \Delta W_{\text{shared}}^{(t)} - U U^\top \Delta W_{\text{shared}}^{(t)}
\label{eq:null_space}
\end{equation}
Intuitively, the projection keeps only what is new in each task, so earlier representations are not overwritten. This projection reduces interference with previously consolidated representations. To improve scalability for large LLMs, we implement the projection using randomized low-rank SVD directly on factorized LoRA matrices. Details and efficiency analysis are provided in Appendix~\ref{sec:appendix:implementation}.

\paragraph{Dynamic Scaling Constraint.} To maintain the stability of the parameter magnitude, we introduce time-varying scaling coefficients $c_t = \sqrt{\frac{t-1}{t}}$ and $s_t = \frac{1}{\sqrt{t}}$ satisfying the normalization identity $c_t^2 + s_t^2 = 1$. Since the null space projection ensures that each new update is orthogonal to the existing parameters, we scale the null space update component by $s_t$. This constraint prevents the cumulative parameter magnitude from exploding or drifting as the number of sequential tasks increases. We provide the theoretical proof of the stable parameter bound and its empirical verification in Appendix~\ref{sec:appendix:scaling}.

The final consolidated weight matrix is updated by merging the scaled null space component with the previous state:
\begin{equation}
W_{\text{acc}}^{(t)} = c_t W_{\text{acc}}^{(t-1)} + s_t \Delta W_{\text{null}}
\label{eq:update_rule}
\end{equation}
This recursive formulation ensures that the consolidated weight matrix represents a normalized accumulation of orthogonal task updates. The choice of scaling coefficients stabilizes the parameter magnitude across tasks as follows:

\begin{proposition}
Given the recursive update rule defined in Equation~\eqref{eq:update_rule}, assuming the initial LoRA weights are zero ($W_{\text{acc}}^{(0)} = W^{(0)} = 0$) and that $U$ spans the column space of $W_{\text{acc}}^{(t-1)}$, the distance between the consolidated model and the pre-trained model remains bounded:
\begin{equation}
\left\| W_{\text{acc}}^{(t)} - W^{(0)} \right\|^2_F \leq M_t
\end{equation}
where $M_t = \max_{i \in [1,t]} \left\| \Delta W_{\text{shared}}^{(i)} \right\|^2_F$.
\end{proposition}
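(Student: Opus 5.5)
Since $W^{(0)}=0$, it suffices to bound $\|W_{\text{acc}}^{(t)}\|_F^2$, and I will do this by induction on $t$. The engine is a Pythagorean identity for the update rule~\eqref{eq:update_rule}. Because $U$ spans the column space of $W_{\text{acc}}^{(t-1)}$, we have $UU^\top W_{\text{acc}}^{(t-1)}=W_{\text{acc}}^{(t-1)}$. Together with~\eqref{eq:null_space} this gives
\[
\langle W_{\text{acc}}^{(t-1)},\Delta W_{\text{null}}\rangle_F=\operatorname{tr}\bigl((W_{\text{acc}}^{(t-1)})^\top (I-UU^\top)\Delta W_{\text{shared}}^{(t)}\bigr)=\operatorname{tr}\bigl(((I-UU^\top)W_{\text{acc}}^{(t-1)})^\top\Delta W_{\text{shared}}^{(t)}\bigr)=0,
\]
where the middle step uses the symmetry of the projector $I-UU^\top$. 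The cross term therefore vanishes, and
\[
\|W_{\text{acc}}^{(t)}\|_F^2=c_t^2\|W_{\text{acc}}^{(t-1)}\|_F^2+s_t^2\|\Delta W_{\text{null}}\|_F^2 .
\]

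Next I bound the new term. Since $I-UU^\top$ is an orthogonal projector, it is nonexpansive in Frobenius norm (apply it column by column). Hence $\|\Delta W_{\text{null}}\|_F^2\le\|\Delta W_{\text{shared}}^{(t)}\|_F^2\le M_t$.

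For the induction, the base case $t=1$ has $c_1=0$ and $s_1=1$, so $\|W_{\text{acc}}^{(1)}\|_F^2=\|\Delta W_{\text{null}}\|_F^2\le M_1$. For the inductive step, assume $\|W_{\text{acc}}^{(t-1)}\|_F^2\le M_{t-1}$. Since $M_{t-1}\le M_t$ by monotonicity of the running maximum, and $c_t^2+s_t^2=1$, we get
\[
\|W_{\text{acc}}^{(t)}\|_F^2\le c_t^2M_t+s_t^2M_t=M_t .
\]
Unrolling instead gives the sharper identity $\|W_{\text{acc}}^{(t)}\|_F^2=\frac1t\sum_{i=1}^t\|\Delta W_{\text{null}}^{(i)}\|_F^2$, an average and hence at most the maximum. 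I may remark on this, since it shows the $1/\sqrt t$ scaling makes the accumulation a normalized average.

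The only delicate point is the orthogonality step. It depends on the hypothesis that $U$ spans the full column space of $W_{\text{acc}}^{(t-1)}$. With the truncation $d_v=r$ this requires $\operatorname{rank}W_{\text{acc}}^{(t-1)}\le d_v$, and I will state explicitly that this is exactly what the proposition assumes. Without it, a cross term $2c_ts_t\langle W_{\text{acc}}^{(t-1)},\Delta W_{\text{null}}\rangle$ survives. That term would then have to be bounded via Cauchy--Schwarz, which loses the clean bound.
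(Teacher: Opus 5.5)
Your proof is correct and takes the same route as the paper's. The spanning hypothesis kills the cross term $\langle W_{\text{acc}}^{(t-1)},\Delta W_{\text{null}}\rangle_F$ and gives the Pythagorean identity; nonexpansiveness of $I-UU^\top$, induction with $c_t^2+s_t^2=1$, and the monotonicity $M_{t-1}\le M_t$ then close the argument. Your explicit trace computation and the unrolled identity $\|W_{\text{acc}}^{(t)}\|_F^2=\frac{1}{t}\sum_{i=1}^t\|\Delta W_{\text{null}}^{(i)}\|_F^2$ are small refinements beyond the paper, and, as in the paper, your induction needs the spanning hypothesis at every step $i\le t$, not only at the last one.
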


\subsection{Decoupling Task-Specific Expertise}
\label{sec:routing}

While the Consolidation Branch captures transferable representations, task-specific nuances are isolated within the Decoupling Branch (De-LoRA), consisting of a pool of task-specific experts $\{\theta_{\text{spec}}^k\}_{k=1}^T$. To enable task-agnostic inference, we introduce a semantic routing mechanism that leverages task prototypes to dynamically activate the most relevant De-LoRA expert for each input.

\paragraph{Prototype Construction.}
For each task, we construct a prototype $P_k \in \mathbb{R}^{d_{\text{model}}}$ as the normalized mean of the frozen-backbone embeddings $f_{\theta_{\text{base}}}(x_i)$ of $N$ randomly selected training samples (fixed to $N=10$ throughout experiments). Crucially, because prototypes and runtime query representations are extracted solely using the frozen base model $\theta_{\text{base}}$ rather than adapter-modulated layers, the semantic matching space does not drift over sequential learning phases.

\paragraph{Routing Procedure.}
During inference, the system identifies the optimal De-LoRA expert $\theta_{\text{spec}}^{k^*}$ for a given input $x$ by measuring the similarity between the input embedding $e_x = f_{\theta_{\text{base}}}(x)$ and the stored prototypes $\{P_k\}_{k=1}^T$. Since prototypes are pre-normalized, the routing index $k^*=\arg\max_{k} \frac{e_x \cdot P_k}{\|e_x\|}$ is the prototype with the highest cosine similarity. Routing stores only compact per-task statistics rather than raw samples, so it remains replay-free. The selected expert $\theta_{\text{spec}}^{k^*}$ is then activated and integrated with the Consolidation Branch for inference via the fallback strategy described in Section~\ref{sec:ensemble}. This similarity-based routing facilitates positive forward transfer: for inputs that bridge multiple task distributions, the mechanism naturally selects the most semantically related expert, enabling the model to leverage relevant prior knowledge. Appendix~\ref{sec:appendix:router_choice} compares the routers and the cost of routing errors.

\begin{table*}[t]
\caption{Performance comparison of different methods using the T5 model on Standard CL Benchmark and Large Number of Tasks. The average accuracy after training on the final task is reported. The top two results (excluding the oracle and MTL references) are \textbf{bolded} and \underline{underlined}. Subscripts denote standard deviations across 3 random seeds.}
\label{tab:results}
\centering
\small
\setlength{\tabcolsep}{5pt}
\begin{tabularx}{\textwidth}{cl*{8}{C}}
\toprule
& \multirow{2}{*}{\textbf{Method}} & \multicolumn{4}{c}{\textbf{Standard CL Benchmark}} & \multicolumn{4}{c}{\textbf{Large Number of Tasks}} \\
\cmidrule(lr){3-6} \cmidrule(l){7-10}
& & \textbf{Order-1} & \textbf{Order-2} & \textbf{Order-3} & \textbf{Avg} & \textbf{Order-4} & \textbf{Order-5} & \textbf{Order-6} & \textbf{Avg} \\
\midrule
\multirow{2}{*}{\rotatebox[origin=c]{90}{\textit{Ref.}}} & Per-LoRA (Oracle) & 81.4 & 81.3 & 81.4 & 81.4 & 77.6 & 77.4 & 77.5 & 77.5 \\
& MTL & 80.0 & 80.0 & 80.0 & 80.0 & 76.5 & 76.5 & 76.5 & 76.5 \\
\midrule
\multirow{3}{*}{\rotatebox[origin=c]{90}{\textit{Seq.\ FT}}} & SeqFT & 18.9 & 24.9 & 41.7 & 28.5 & 7.4 & 7.4 & 7.5 & 7.4 \\
& SeqLoRA & 44.6 & 32.7 & 53.7 & 43.7 & 0.6 & 1.9 & 1.6 & 1.4 \\
& IncLoRA & 66.0 & 64.9 & 68.3 & 66.4 & 63.3 & 58.5 & 61.7 & 61.2 \\
\midrule
\multirow{5}{*}{\rotatebox[origin=c]{90}{\textit{Classic CL}}} & Replay & 55.2 & 56.9 & 61.3 & 57.8 & 55.0 & 54.6 & 53.1 & 54.2 \\
& EWC & 48.7 & 47.7 & 54.5 & 50.3 & 45.3 & 44.5 & 45.6 & 45.1 \\
& LwF & 54.4 & 53.1 & 49.6 & 52.3 & 50.1 & 43.1 & 47.4 & 46.9 \\
& L2P & 60.3 & 61.7 & 61.1 & 61.0 & 57.5 & 53.8 & 56.9 & 56.1 \\
& LFPT5 & 67.6 & 72.6 & 77.9 & 72.7 & 70.4 & 68.2 & 69.1 & 69.2 \\
\midrule
\multirow{7}{*}{\rotatebox[origin=c]{90}{\textit{Advanced CL}}} & O-LoRA & 75.4 & 75.7 & 76.3 & 75.8 & 72.3 & 64.8 & 71.6 & 69.6 \\
& MIGU & 77.1 & 77.0 & 75.6 & 76.6 & 67.3 & 68.5 & 74.2 & 70.0 \\
& N-LoRA & 79.2 & 78.4 & 78.8 & 78.8 & 73.6 & 70.3 & 73.2 & 72.4 \\
& CLoRA & \underline{79.7} & 79.1 & 78.2 & 79.0 & 70.7 & 65.6 & 68.2 & 68.1 \\
& Co-LoRA & 75.9{\scriptsize\textcolor{black!55}{$\pm$0.2}} & 75.5{\scriptsize\textcolor{black!55}{$\pm$0.2}} & 75.3{\scriptsize\textcolor{black!55}{$\pm$0.7}} & 75.6{\scriptsize\textcolor{black!55}{$\pm$0.4}} & 64.0{\scriptsize\textcolor{black!55}{$\pm$1.7}} & 67.4{\scriptsize\textcolor{black!55}{$\pm$1.3}} & 63.6{\scriptsize\textcolor{black!55}{$\pm$1.8}} & 65.0{\scriptsize\textcolor{black!55}{$\pm$1.9}} \\
& De-LoRA & 79.4{\scriptsize\textcolor{black!55}{$\pm$0.3}} & \underline{79.2}{\scriptsize\textcolor{black!55}{$\pm$0.1}} & \underline{79.3}{\scriptsize\textcolor{black!55}{$\pm$0.1}} & \underline{79.3}{\scriptsize\textcolor{black!55}{$\pm$0.2}} & \underline{76.7}{\scriptsize\textcolor{black!55}{$\pm$0.4}} & \underline{76.6}{\scriptsize\textcolor{black!55}{$\pm$0.8}} & \underline{76.2}{\scriptsize\textcolor{black!55}{$\pm$0.8}} & \underline{76.5}{\scriptsize\textcolor{black!55}{$\pm$0.7}} \\
\rowcolor{rowhl}
& \textbf{CoDe-LoRA (Ours)} & \textbf{80.3}{\scriptsize\textcolor{black!55}{$\pm$0.2}} & \textbf{80.4}{\scriptsize\textcolor{black!55}{$\pm$0.2}} & \textbf{80.5}{\scriptsize\textcolor{black!55}{$\pm$0.3}} & \textbf{80.4}{\scriptsize\textcolor{black!55}{$\pm$0.2}} & \textbf{78.6}{\scriptsize\textcolor{black!55}{$\pm$0.5}} & \textbf{78.8}{\scriptsize\textcolor{black!55}{$\pm$0.7}} & \textbf{79.8}{\scriptsize\textcolor{black!55}{$\pm$0.3}} & \textbf{79.1}{\scriptsize\textcolor{black!55}{$\pm$0.7}} \\
\bottomrule
\end{tabularx}
\end{table*}

\subsection{Knowledge Integration and Fallback Selection}
\label{sec:ensemble}

To produce the final prediction, we employ a confidence-based fallback selection strategy that prioritizes specialized expertise while leveraging the consolidated shared branch as a robust prior. Here, the routing confidence is defined as the maximum cosine similarity between the input embedding and task prototypes. If the routing confidence exceeds a threshold $\tau$ (a hyperparameter set to $0.75$ by default), the system routes inference to the task-specific expert $\theta_{\text{spec}}^{k^*}$, executing $\hat{y} = \text{Model}(x; \theta_{\text{base}}, \theta_{\text{spec}}^{k^*})$. Otherwise, the model falls back to the consolidated shared branch, executing $\hat{y} = \text{Model}(x; \theta_{\text{base}}, \theta_{\text{shared}})$. A higher $\tau$ trusts the router only on clear matches and sends the rest to the shared branch, whereas $\tau=0$ always trusts the routed expert. This fallback strategy dynamically balances stability and plasticity by preserving high-capacity specialized expertise while utilizing the consolidated branch as a robust safety net for ambiguous or out-of-distribution inputs. For classification tasks, candidate-label scores are additionally PMI-calibrated. The complete training and consolidation workflow is detailed in Appendix~\ref{sec:appendix:implementation}. Both branches share the same frozen backbone, so no backbone parameters are updated.

\section{Experiments}
\label{sec:experiments}
\subsection{Experimental Setup}
\paragraph{Datasets.} We evaluate CoDe-LoRA on three benchmarks: the Standard CL Benchmark~\cite{zhang2015character}, the Large Number of Tasks Benchmark~\cite{razdaibiedina2023progressive}, and the TRACE benchmark~\cite{wang2023trace} which contains diverse generative tasks. Following \citet{qin2021lfpt5}, we evaluate across three distinct task orders for each benchmark to ensure robustness. The complete specifications, including task sequences, dataset statistics, and natural language instructions, are detailed in Appendix~\ref{sec:appendix:datasets} and \ref{sec:appendix:sequences}.

\begin{table*}[t]
\caption{Results across backbones. Std./Long: average accuracy on the Standard CL/Long Sequence benchmarks; OP/BWT: Trace metrics. Co/De-LoRA: single-branch ablations. Top two per column are \textbf{bolded} and \underline{underlined}.}
\label{table:multi_model_results}
\centering
\small
\begin{tabularx}{\textwidth}{l*{12}{C}}
\toprule
\multirow{2}{*}{\textbf{Method}} & \multicolumn{4}{c}{\textbf{Llama-2-7B}} & \multicolumn{4}{c}{\textbf{Qwen3-0.6B}} & \multicolumn{4}{c}{\textbf{Qwen3.5-4B}} \\
\cmidrule(lr){2-5} \cmidrule(lr){6-9} \cmidrule(lr){10-13} 
& Std. & Long & OP & BWT & Std. & Long & OP & BWT & Std. & Long & OP & BWT \\
\midrule
CLoRA & 78.0 & 67.8 & 45.3 & $-$6.0 & 68.0 & 61.6 & 39.8 & $-$10.6 & 80.7 & 82.1 & 60.8 & $-$4.0 \\
O-LoRA & 77.0 & 67.8 & 44.2 & $-$3.6 & 70.7 & 64.0 & 41.3 & $-$5.2 & 80.4 & 83.3 & 60.2 & $-$1.3 \\
N-LoRA & 73.9 & 57.4 & 44.0 & $-$1.6 & 65.9 & 58.9 & 41.8 & $-$4.3 & 79.0 & 81.6 & 58.4 & $-$1.4 \\
MoLE-CIE & 68.1 & 70.6 & 36.2 & $-$17.2 & 69.5 & 69.3 & 43.0 & $-$6.1 & 70.9 & 78.4 & 49.1 & $-$17.8 \\
Co-LoRA & 73.0 & 59.6 & 43.3 & $-$2.0 & 66.4 & 60.1 & 41.3 & $-$4.6 & 78.9 & 77.5 & 56.8 & $-$2.3 \\
De-LoRA & \underline{81.1} & \underline{73.7} & \underline{50.0} & \textbf{0.0} & \underline{77.3} & \underline{70.2} & \textbf{52.2} & \textbf{0.0} & \underline{81.7} & \underline{84.9} & \textbf{64.2} & \textbf{0.0} \\
\rowcolor{rowhl}
\textbf{CoDe-LoRA} & \textbf{81.8} & \textbf{74.4} & \textbf{50.1} & \textbf{0.0} & \textbf{79.7} & \textbf{73.0} & \underline{51.8} & \textbf{0.0} & \textbf{82.1} & \textbf{85.1} & \underline{64.0} & \textbf{0.0} \\
\bottomrule
\end{tabularx}\par
\vspace{18pt}
\begingroup\captionsetup{type=figure}
\centering
		\begin{subfigure}{0.3\linewidth}
		\includegraphics[width=1\columnwidth]{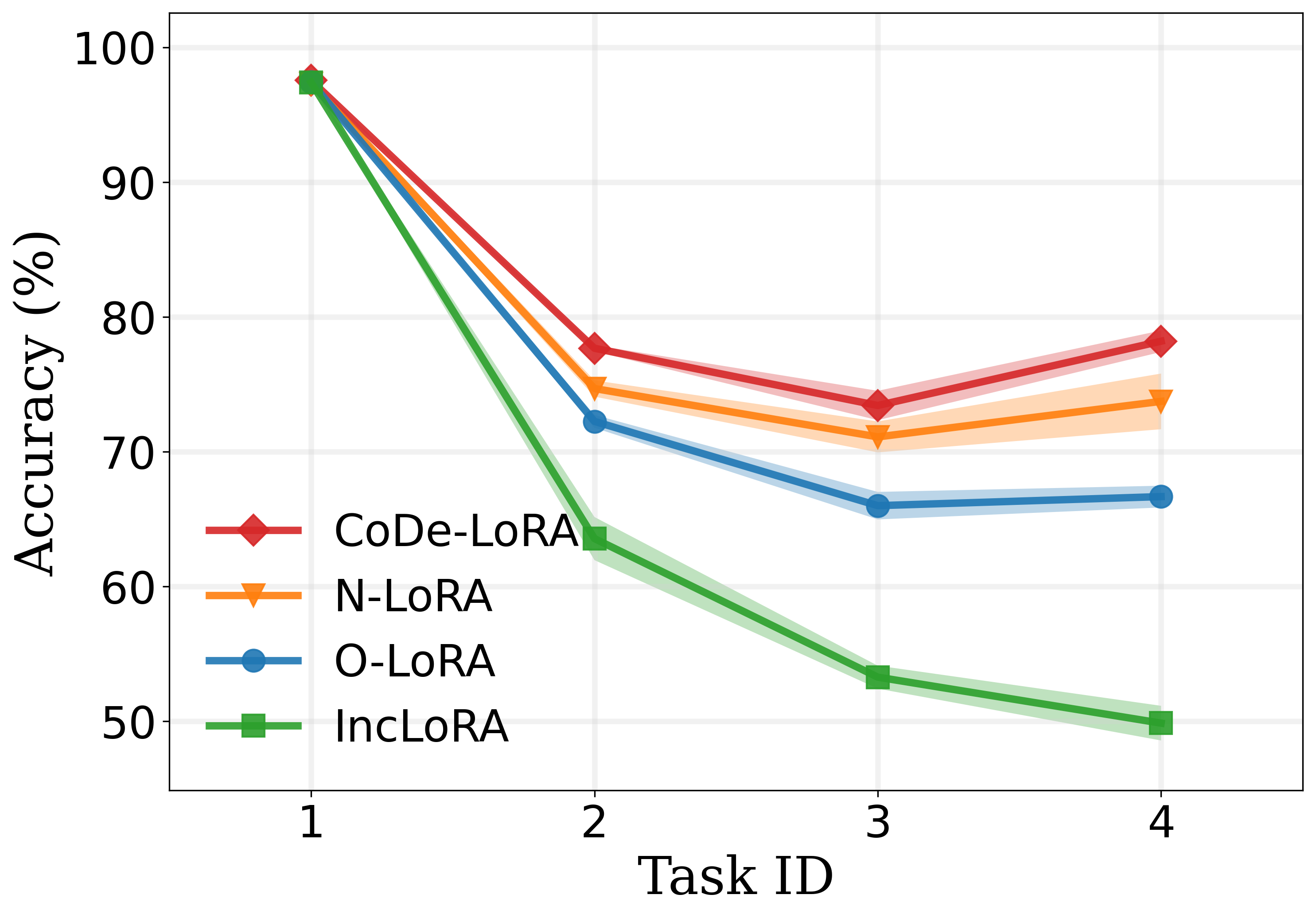}
		\caption{Task Order 1}
		\label{fig:order1}
	\end{subfigure}
	\hfill
	\begin{subfigure}{0.3\linewidth}
		\includegraphics[width=1\linewidth]{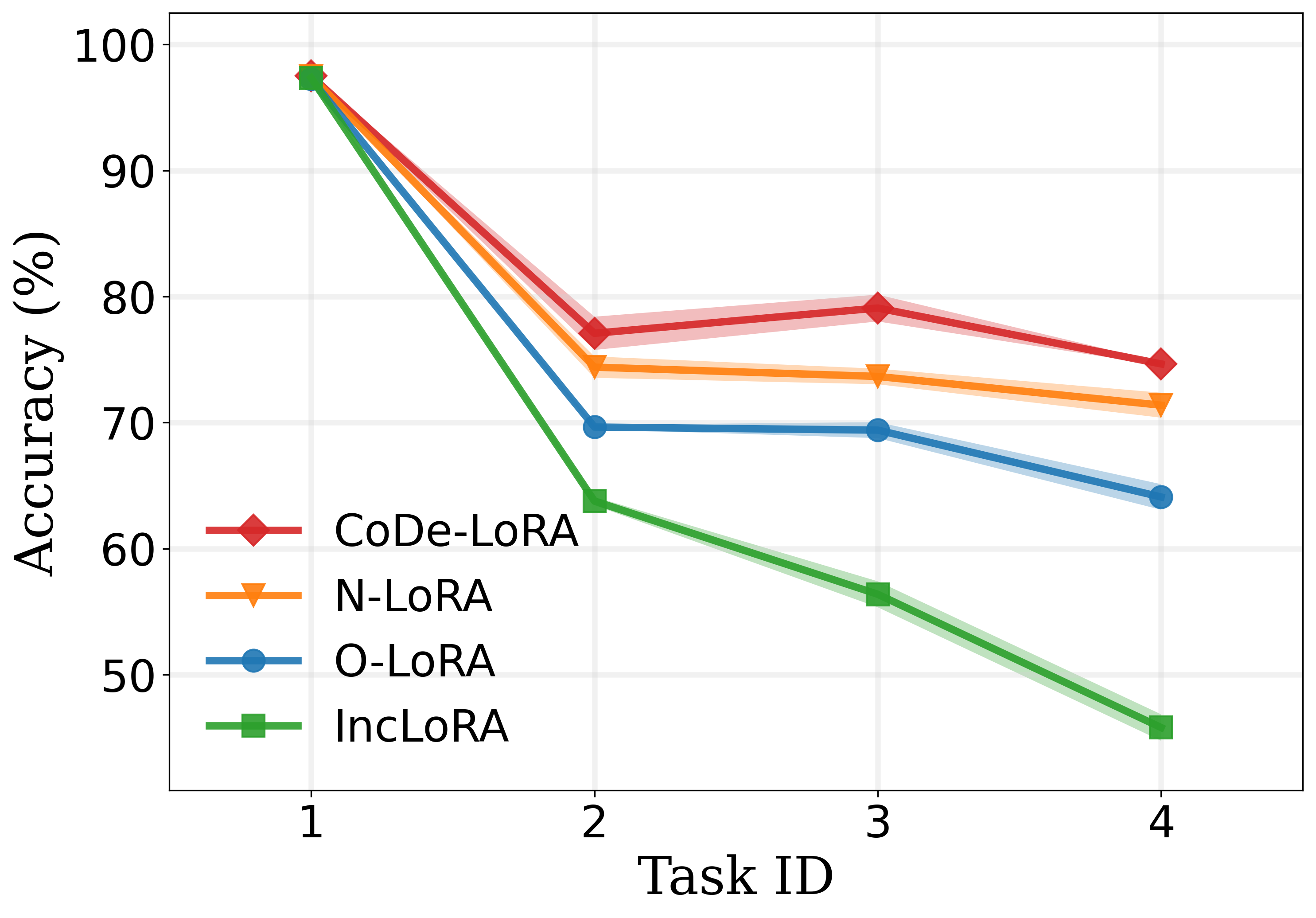}
		\caption{Task Order 2}
		\label{fig:order2}
	\end{subfigure}
	\hfill
	\begin{subfigure}{0.3\linewidth}
		\includegraphics[width=1\linewidth]{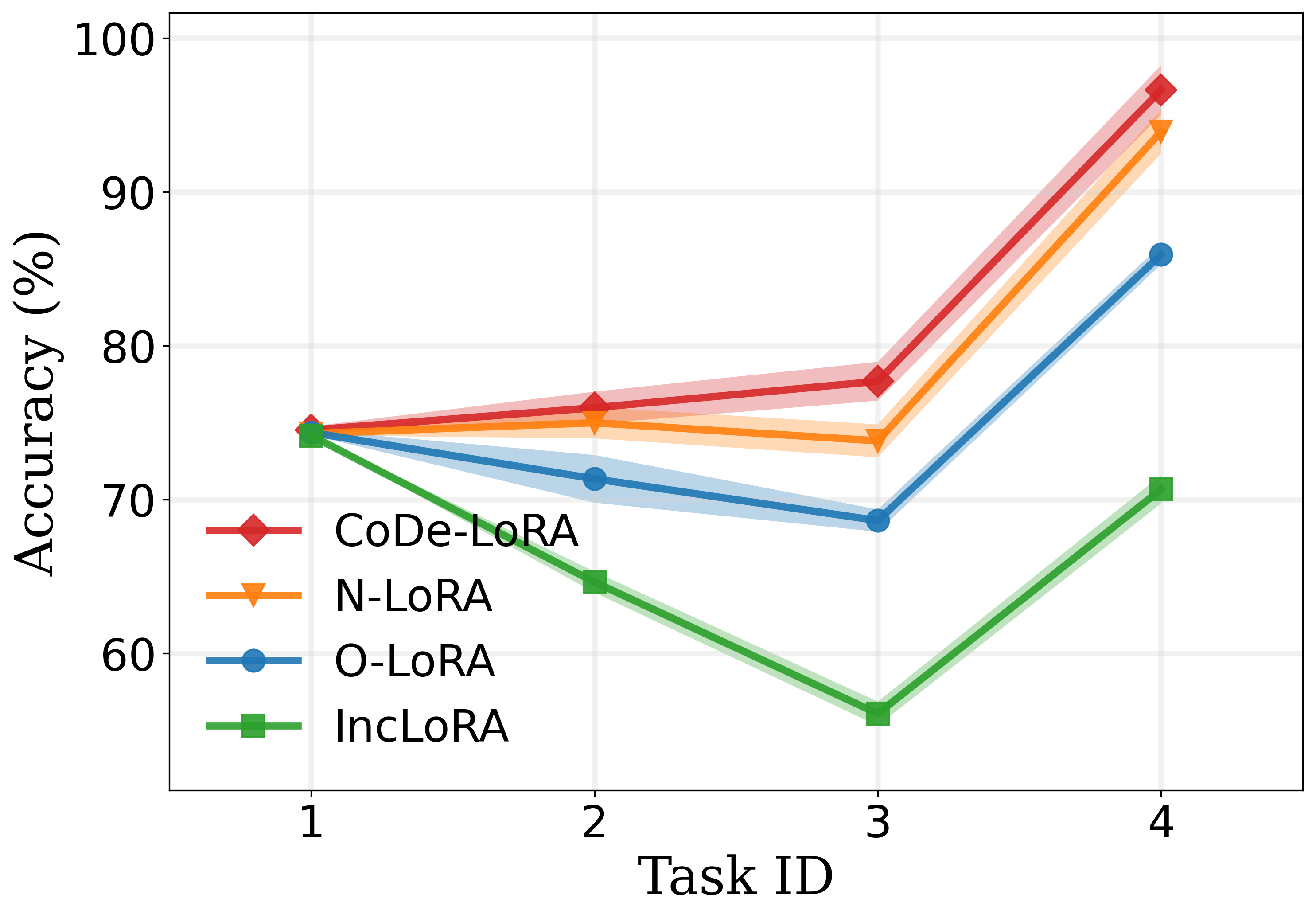}
		\caption{Task Order 3}
		\label{fig:order3}
	\end{subfigure}
	\par\vspace{-4pt}
	\begin{subfigure}{0.3\linewidth}
		\includegraphics[width=1\linewidth]{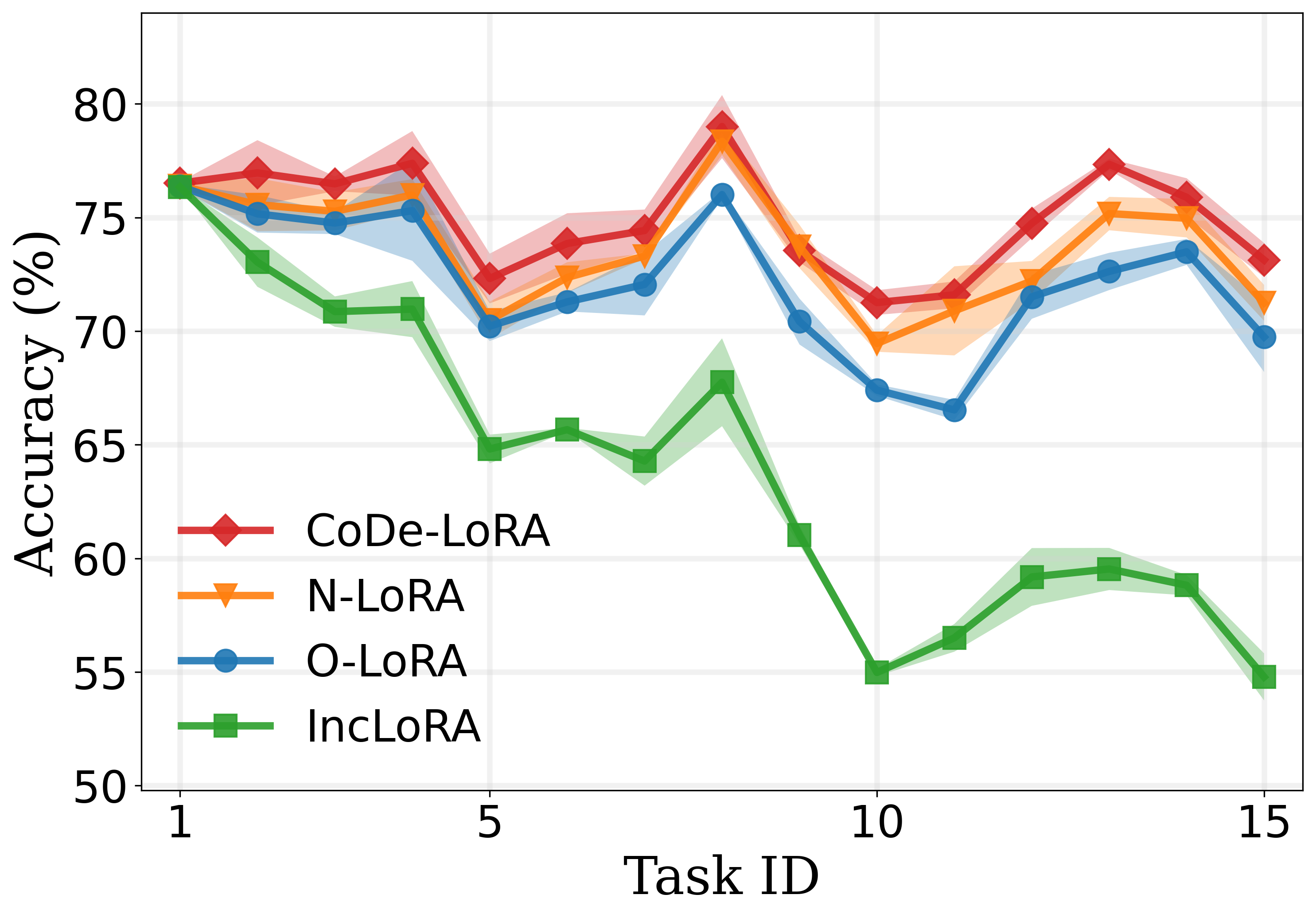}
		\caption{Task Order 4}
		\label{fig:order4}
	\end{subfigure}
	\hfill
	\begin{subfigure}{0.3\linewidth}
		\includegraphics[width=1\linewidth]{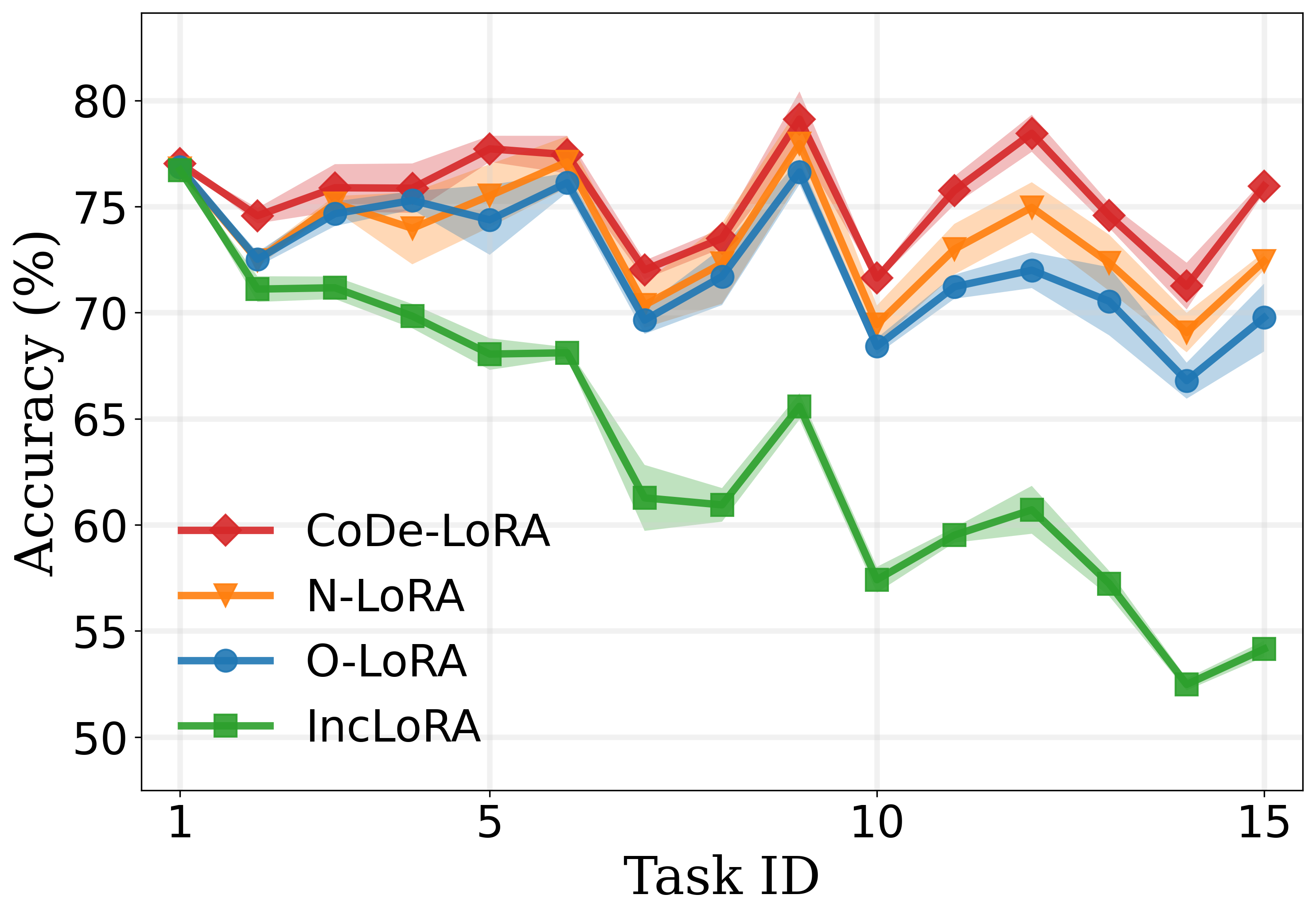}
		\caption{Task Order 5}
		\label{fig:order5}
	\end{subfigure}
	\hfill
	\begin{subfigure}{0.3\linewidth}
		\includegraphics[width=1\columnwidth]{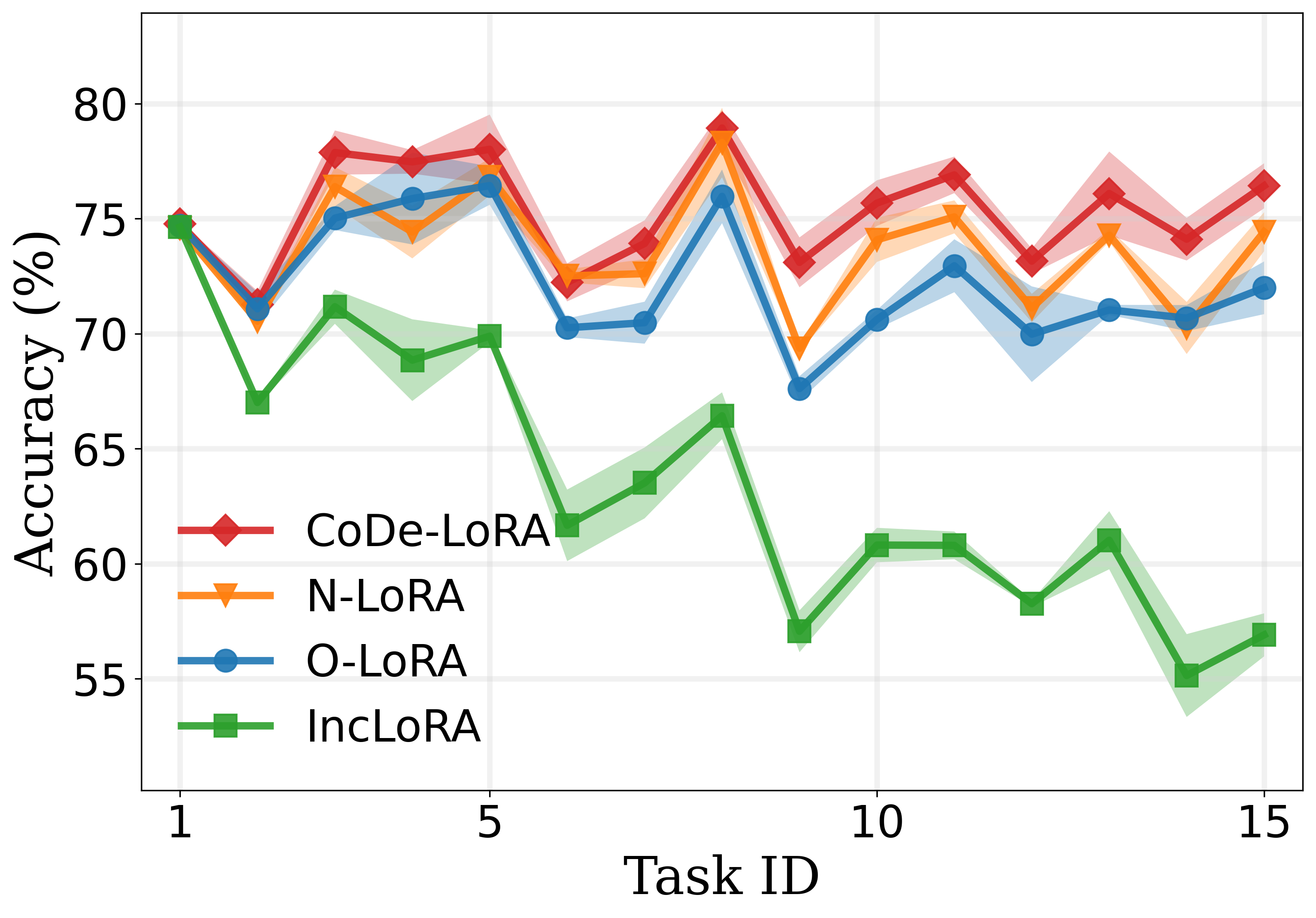}
		\caption{Task Order 6}
		\label{fig:order6}
	\end{subfigure}
	\captionof{figure}{Performance curves of different methods across six task orders on the Standard CL Benchmark (Orders 1-3) and Large Number of Tasks Benchmark (Orders 4-6). Results are averaged over three independent runs.}
	\label{fig:benchmark}
\endgroup
\end{table*}

\paragraph{Evaluation Metrics.} Let $a_{i,j}$ denote the performance on task $i$ after training on task $j$. To assess continual learning performance, we report two primary metrics: (1) \textbf{Average Performance (AP)}, computed as the average task performance at the end of the continual sequence (Average Accuracy for classification benchmarks and Overall Performance for generative benchmarks); and (2) \textbf{Backward Transfer (BWT)}, which measures the retention of previously learned knowledge across tasks. We additionally analyze forward transfer behavior to evaluate the model's ability to accumulate transferable representations across sequential tasks. Formal definitions are provided in Appendix~\ref{sec:appendix:metrics}.

\paragraph{Baselines.} We compare CoDe-LoRA against four categories of continual learning baselines: (1) upper-reference baselines, including Multi-Task Learning (MTL) and Per-LoRA (Oracle); (2) sequential fine-tuning methods, including SeqFT~\cite{de2019episodic}, SeqLoRA, and IncLoRA; (3) continual learning methods, including EWC~\cite{kirkpatrick2017overcoming}, LwF~\cite{li2017learning}, Replay~\cite{lopez2017gradient}, L2P~\cite{wang2022learning}, and LFPT5~\cite{qin2021lfpt5}; and (4) parameter-efficient continual learning methods, including O-LoRA~\cite{wang2023orthogonal}, MIGU~\cite{Du2024UnlockingCL}, N-LoRA~\cite{yang2024parameter}, CLoRA~\cite{Lu2024ControlledLA}, and the routing-based MoLE-CIE~\cite{wang2025mixture}.

\paragraph{Implementation Details.} We evaluate CoDe-LoRA across different model families and scales using T5~\cite{raffel2020exploring}, Llama 2-7B~\cite{Touvron2023Llama2O}, Qwen3-0.6B~\cite{yang2025qwen3}, and Qwen3.5-4B~\cite{qwen35blog}. All models are optimized with AdamW~\cite{loshchilov2017decoupled}, training each task for one epoch with LoRA rank 8. The learning rate is $10^{-3}$ for T5 and $10^{-4}$ for Llama 2-7B and Qwen3.5-4B; on Qwen3-0.6B, the per-task-expert methods (De-LoRA, CoDe-LoRA) use $10^{-3}$ and the single-shared-adapter methods use $10^{-4}$. Results are averaged over three independent runs. Hyperparameters, routing details, and implementation settings are provided in Appendix~\ref{sec:appendix:implementation}.

\subsection{Main Results}

On the Standard CL Benchmark (Table~\ref{tab:results}), CoDe-LoRA achieves the highest average accuracy (80.4\%), outperforming prior PEFT-based CL methods and exceeding the MTL reference (80.0\%). This result suggests that the decoupled expert pool provides task-specific capacity while avoiding the forgetting typical of sequential fine-tuning.

On the Large Number of Tasks benchmark, the performance gap is more pronounced: CoDe-LoRA achieves 79.1\% average accuracy, outperforming the strongest baseline, N-LoRA, by 6.7 points (79.1\% vs. 72.4\%). While O-LoRA reduces interference through strict orthogonal isolation, its performance degrades on longer sequences where transferable representations become increasingly important. These results indicate stronger scalability in long-horizon scenarios.

\paragraph{Generalization to Generative Tasks.}
We further evaluate CoDe-LoRA on the TRACE benchmark~\cite{wang2023trace}, which contains generative tasks. As shown in Table~\ref{table:multi_model_results}, CoDe-LoRA achieves competitive generation quality (OP) and consistently improves knowledge retention (BWT) without relying on replay buffers, indicating improved retention in generative domains.

\begin{figure*}[t]
  \centering
  \includegraphics[width=\textwidth]{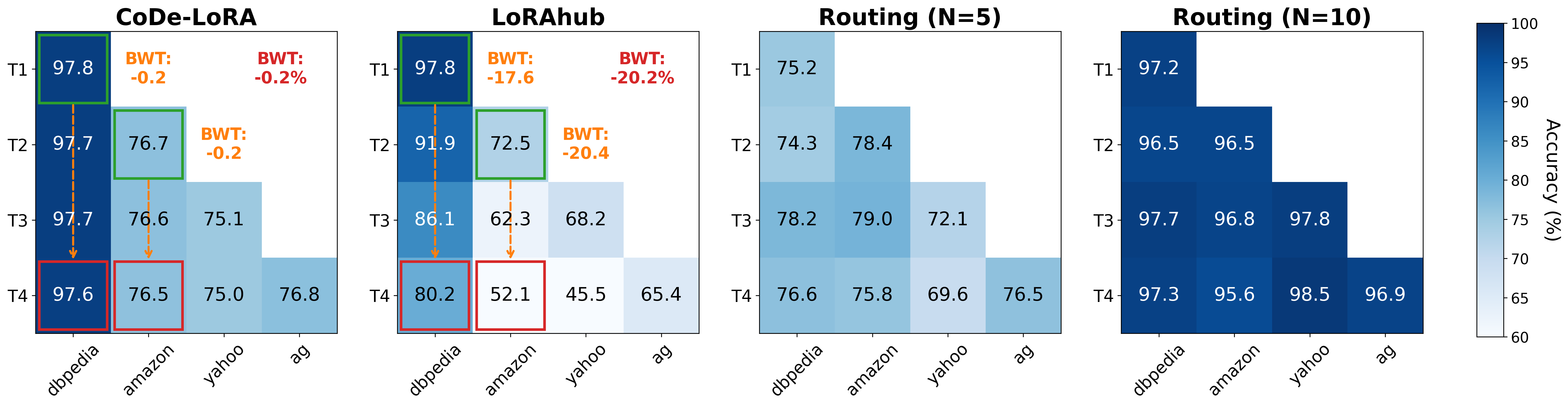}
  \caption{Ablation analysis of CoDe-LoRA showing task-wise accuracy and BWT under different ensemble strategies (left) and routing accuracy across varying prototype sample sizes $N$ (right).}
  \label{fig:ablation}
\end{figure*}

\begin{figure}[t]
  \centering
  \includegraphics[width=\columnwidth]{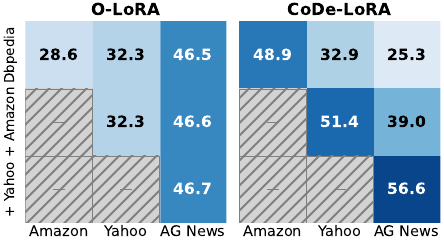}
  \caption{Zero-shot forward transfer on Qwen3-0.6B (Order 1). ``N/A'' indicates trained tasks.}
  \label{fig:forward_transfer}
\end{figure}

\paragraph{Generalization across Model Families.}
To evaluate generalization across model architectures and scales, we evaluate CoDe-LoRA on Qwen3-0.6B, Qwen3.5-4B, and Llama 2-7B against isolation-based PEFT (O-LoRA, N-LoRA) and routing-based MoE (MoLE-CIE~\cite{wang2025mixture}) baselines (Table~\ref{table:multi_model_results}). CoDe-LoRA outperforms every baseline on every backbone and benchmark. On Qwen3-0.6B it improves over the strongest baseline by 9.0 points on standard CL (79.7 vs. 70.7) and by 3.7 points on the long-sequence benchmark (73.0 vs. 69.3); the corresponding margins are 3.8 and 3.8 on Llama 2-7B and 1.4 and 1.8 on Qwen3.5-4B, and TRACE OP improves by 3.2 to 8.8 points. Because every task retains its own frozen expert, both De-LoRA and CoDe-LoRA show no measurable backward transfer loss on TRACE (BWT $=0.0$ on all three backbones), whereas the strongest baseline per backbone still loses between 1.3 and 4.3 points, and MoLE-CIE loses up to 17.8. The single-branch ablations locate the gain: Co-LoRA alone is the weakest of the three, De-LoRA already accounts for most of the accuracy, and adding the consolidation branch adds a further 0.4 to 2.8 points on Std. CL and Long Seq. On TRACE, where the two branches cannot be combined, CoDe-LoRA matches De-LoRA. All of this is achieved without maintaining historical replay buffers.

\subsection{Ablation Studies and Analysis}

\paragraph{Capacity Control.}
We re-train O-LoRA and N-LoRA with $r=10$ on T5-Large so that their per-task adapter footprint matches CoDe-LoRA at $r=8$, keeping every other setting identical (three task orders, three seeds each). Raising the baseline rank changes little: O-LoRA moves from 76.5 to 76.4 on the Standard CL Benchmark and from 70.2 to 70.4 on the Long Sequence Benchmark, and N-LoRA does not improve either. Against the strongest capacity-matched baseline, CoDe-LoRA leads by 4.0 points (80.4 vs. 76.4) and by 8.7 points (79.1 vs. 70.4), respectively. The gains therefore stem from decoupled knowledge consolidation rather than from parameter capacity.

\paragraph{Ensemble Strategy.}
We compare CoDe-LoRA with LoRAHub~\cite{huang2023lorahub}, which directly merges task-specific LoRA modules. As shown in Figure~\ref{fig:ablation}, LoRAHub exhibits severe catastrophic forgetting, yielding a BWT of -20.2\% compared to -0.2\% for CoDe-LoRA. This suggests that direct parameter merging introduces significantly stronger cross-task interference.

\paragraph{Forward Transfer.}
Figure~\ref{fig:forward_transfer} shows the zero-shot forward transfer matrix on Qwen3-0.6B (Order 1). CoDe-LoRA achieves consistently higher forward transfer than O-LoRA, particularly after progressively accumulating knowledge across tasks (e.g., +19.1 points on Yahoo after training on dbpedia and amazon). This suggests that the consolidation branch effectively accumulates transferable representations that benefit future unseen tasks.

\paragraph{Impact of Prototype Samples.}
We analyze the sensitivity of routing performance to the number of prototype samples $N$. Figure~\ref{fig:ablation} (right) shows that routing accuracy improves as $N$ increases, with performance saturating around $N=10$. This indicates that only a small number of samples are required for reliable task routing.

\paragraph{Contribution of Each Branch.}
To further analyze the necessity of the proposed dual-branch design, we compare the Consolidation Branch alone (Co-LoRA), the Decoupling Branch alone (De-LoRA), and the full CoDe-LoRA architecture on the Standard CL Benchmark, as reported in the corresponding rows of Table~\ref{tab:results}.
\begin{table}[t]
\caption{Efficiency on Order 1 (T5-large, one H20 GPU). Train and inference are seconds per task and per evaluated test set; CoDe-LoRA is a single seed, the others are means over three seeds. Params are the stored adapter parameters (CoDe-LoRA stores only the experts) and their share of the backbone.}
\label{tab:olora-efficiency}
\centering
\small
\setlength{\tabcolsep}{3pt}
\begin{tabular}{lccc}
\toprule
\textbf{Method} & \textbf{Train (s)} & \textbf{Infer (s)} & \textbf{Params} \\ \midrule
O-LoRA    & 108.8 & 30.1  & 9.4M (1.28\%) \\
N-LoRA    & 98.0  & 29.9  & 9.4M (1.28\%) \\
MoLE-CIE  & 861.0 & 142.7 & 32.4M (4.40\%) \\
CoDe-LoRA & 98.4  & 43.1  & 9.4M (1.28\%) \\
\bottomrule
\end{tabular}
\end{table}
The consolidation branch (Co-LoRA) alone captures shared representations, while the decoupling branch (De-LoRA) alone preserves task-specific adaptation. Combining both branches yields the best overall performance across all task sequences (Table~\ref{tab:results}). Appendix~\ref{sec:appendix:synergy} reports the complementarity of the two branches and the attempts to exploit it.

\paragraph{Efficiency and Scalability Analysis.}
CoDe-LoRA trains one adapter per task on top of the consolidated branch, so a task costs as much as for O-LoRA and N-LoRA (98.4 s vs.\ 108.8 s and 98.0 s on the Standard benchmark, 50.1 s vs.\ 53.2 s and 47.9 s on the Long benchmark) and about $9\times$ less than MoLE-CIE (Table~\ref{tab:olora-efficiency}). It stores only the experts, the same 9.4M adapter parameters as O-LoRA and N-LoRA on four tasks (1.28\% of the backbone, and 35.4M or 4.80\% on the fifteen tasks of the Long benchmark; Appendix Table~\ref{tab:long_seq_efficiency}); the consolidated branch and the per-task contexts are rebuilt by replaying the folds. The price is inference: replaying the context of the routed expert makes an evaluation pass about $1.4\times$ slower than O-LoRA (43.1 s vs.\ 30.1 s).

\section{Conclusion}

We revisit orthogonal continual adaptation from the perspective of transferable representation learning. We show that while strict orthogonal isolation mitigates catastrophic forgetting, it can also suppress transferable representation accumulation across related tasks. To address this limitation, we propose CoDe-LoRA, a dual-branch continual adaptation framework that balances shared representation consolidation and task-specific adaptation. Extensive experiments demonstrate strong continual learning performance together with robust forward and backward transfer across diverse task sequences and model families.

\section*{Limitations}

Although CoDe-LoRA demonstrates strong performance across both classification and generative continual learning benchmarks, our evaluation is limited to pre-trained language models. Extending the framework to multi-modal continual learning settings remains an important direction for future work. In addition, the effectiveness of the routing mechanism depends on the quality of task prototypes in the embedding space. Under severe task distribution shifts or highly overlapping task boundaries, routing reliability may degrade and require more adaptive prototype estimation strategies.

\section*{Acknowledgements}

This work was supported by Beijing Natural Science Foundation (JQ24019). This work was supported by the National Natural Science Foundation of China (No. 62576047). This work is being sponsored by CIPS-SMP-Zhipu Large Model Fund (No. CIPS-SMP20250313) and supported by the Open Fund of the Key Laboratory for Civil Aviation Collaborative Air Traffic Management Technology and Applications (No. 2025-001).

\bibliography{custom}

\appendix

\section{Appendix}
\label{sec:appendix}

\subsection{Notation}
\label{sec:appendix:notation}

We summarize the key notations used throughout this paper in Table~\ref{tab:notation}.

\begin{table}[ht]
\caption{Summary of key notations.}
\label{tab:notation}
\centering
\small
\begin{tabularx}{\columnwidth}{lX}
\toprule
\textbf{Notation} & \textbf{Description} \\
\midrule
$\mathcal{T}^{(t)}$ & The $t$-th task in the sequence \\
$\mathcal{D}^{(t)}$ & Dataset associated with task $\mathcal{T}^{(t)}$ \\
$\theta_{\text{base}}$ & Parameters of the frozen backbone \\
$\theta_{\text{shared}}$ & Parameters of consolidation branch \\
$\theta_{\text{spec}}^k$ & Parameters of $k$-th task expert \\
$W_{\text{acc}}^{(t)}$ & Accumulated shared weights up to task $t$ \\
$\Delta W_{\text{shared}}^{(t)}$ & Consolidation branch update at task $t$ \\
$\Delta W_{\text{null}}$ & Null space projected update component \\
$c_t, s_t$ & Dynamic scaling coefficients at task $t$ \\
$P_k$ & Semantic prototype for task $k$ \\
$e_x$ & Embedding of input $x$ from backbone \\
$\tau$ & Fallback confidence threshold \\
\bottomrule
\end{tabularx}
\end{table}

\subsection{Theoretical Justification for Dynamic Scaling}
\label{sec:appendix:scaling}

In the model merging process, the dynamic time-varying scaling coefficients are introduced to maintain the deviation magnitude between the merged model and the pre-trained model.

\paragraph{General Formulation of Scaling.}
For clarity, we denote the parameter deviation at step $t$ as $D_t = W_{\text{acc}}^{(t)} - W^{(0)}$, the null-space projected update component as $V_t = \Delta W_{\text{null}}^{(t)} = P_{\text{null}} \Delta W_{\text{shared}}^{(t)}$, and the running average norm of task updates as $\mu_t = \frac{1}{t} \sum_{i=1}^t \|\Delta W_{\text{shared}}^{(i)}\|_F$. In our study, the generalized scaling factor $\lambda_t$ is defined as the inverse of the scaling coefficient $s_t$ (i.e., $\lambda_t = 1/s_t$), and can be computed as:
\begin{equation}
\lambda_t = \lambda_{t-1} \cdot \frac{\| D_{t-1} + V_t \|_F}{\mu_t}
\end{equation}
where $P_{\text{null}}$ denotes the null space projection operator.

\paragraph{Derivation of Scaling Coefficients.}
\begin{proof}[Proof of Proposition 1]
To simplify notation in this proof, we denote the consolidated weight matrix $W_{\text{acc}}^{(t)}$ as $W_t$, and the null-space projected update component $\Delta W_{\text{null}}^{(t)}$ as $V_t$. Recall that we assume $W^{(0)} = W_0 = 0$. Under the update rule defined in Eq.~\eqref{eq:update_rule}, the consolidated weight matrix at task $t$ is updated recursively as:
\begin{equation}
W_t = c_t W_{t-1} + s_t V_t
\label{eq:proof_update}
\end{equation}
where the scaling coefficients $c_t = \sqrt{(t-1)/t}$ and $s_t = 1/\sqrt{t}$ satisfy the normalization identity $c_t^2 + s_t^2 = 1$. By definition, the projected component $V_t$ lies in the null space of the accumulated representations $W_{t-1}$, which implies they are orthogonal under the Frobenius inner product (exactly when $U$ spans the column space of $W_{t-1}$, approximately for a truncated basis):
\begin{equation}
\langle W_{t-1}, V_t \rangle_F = 0
\end{equation}
Using this orthogonality, and given the boundary condition $W_0 = 0$, we have $\|W_t - W_0\|^2_F = \|W_t\|^2_F$. Applying the Pythagorean theorem to the update rule in Eq.~\eqref{eq:proof_update} yields:
\begin{equation}
\| W_t \|^2_F = c_t^2 \| W_{t-1} \|^2_F + s_t^2 \| V_t \|^2_F
\label{eq:proof_pythagorean}
\end{equation}
Let $M_t = \max_{i \in [1,t]} \| \Delta W_{\text{shared}}^{(i)} \|^2_F$. We prove that $\| W_t \|^2_F \leq M_t$ by induction on $t$.

For the base case $t=1$, since $c_1 = 0$ and $s_1 = 1$, we have:
\begin{equation}
\| W_1 \|^2_F = \| V_1 \|^2_F \leq M_1
\end{equation}
where the inequality holds because the projection operator is a contraction mapping, i.e., $\|V_t\|_F \leq \|\Delta W_{\text{shared}}^{(t)}\|_F$.

Now, assume that the bound holds up to task $t-1$, i.e., $\| W_{t-1} \|^2_F \leq M_{t-1}$. For task $t$, substituting the induction hypothesis and the contraction property into Eq.~\eqref{eq:proof_pythagorean}:
\begin{align}
\| W_t \|^2_F &\leq c_t^2 M_{t-1} + s_t^2 \| \Delta W_{\text{shared}}^{(t)} \|^2_F \\
&\leq c_t^2 M_t + s_t^2 M_t \\
&= (c_t^2 + s_t^2) M_t = M_t
\end{align}
This completes the induction, showing that the Frobenius norm of the accumulated parameter deviation is bounded by the maximum individual task update norm:
\begin{equation}
\| W_{\text{acc}}^{(t)} - W^{(0)} \|^2_F \leq M_t
\end{equation}
This bound prevents the parameters of the consolidated branch from exploding or drifting away from the pre-trained manifold, mitigating catastrophic forgetting without sacrificing stability.
\end{proof}

\paragraph{Remark on Knowledge Dilution.}
One might be concerned that the $s_t = 1/\sqrt{t}$ scaling in Eq.~\eqref{eq:update_rule} dilutes the historical knowledge in the shared memory. We argue that this scaling acts as a \textbf{variance normalization} mechanism rather than information loss. Without scaling, the norm of the shared branch would grow linearly with $\sqrt{t}$ (assuming orthogonal updates), eventually overwhelming the task-specific branch. The $s_t = 1/\sqrt{t}$ factor ensures that the shared memory remains a stable ``anchor'' while the \textit{specific} branch (Section~\ref{sec:routing}) preserves the distinct decision boundaries for each task. This dual-branch design effectively decouples stability (via normalized shared memory) from plasticity (via preserved specific experts).

\paragraph{Empirical Verification of $1/\sqrt{t}$ Scaling.}
To empirically verify the stabilizing effect of our derived $1/\sqrt{t}$ scaling factor against standard linear averaging ($1/t$) or no scaling ($1$), we conduct a comparative analysis on the Standard CL Benchmark (Order-1, T5-Large). Table~\ref{tab:scaling_empirical} reports the final average accuracy and the Frobenius norm of the accumulated parameters $\|W_{\text{acc}}^{(t)} - W^{(0)}\|_F$.
\begin{table}[ht]
\caption{Scaling factors on the Standard CL Benchmark (Order-1, T5-Large), measured on the null-space consolidation path where this rule is defined. ``Avg Accuracy'' is the full model; ``Shared Branch'' evaluates the consolidated branch alone, which is what the scaling factor controls. Three seeds.}
\label{tab:scaling_empirical}
\centering
\small
\resizebox{\columnwidth}{!}{
\begin{tabular}{llcc}
\toprule
\textbf{Method} & \textbf{$c_t,\ s_t$} & \textbf{Avg Accuracy} & \textbf{Shared Branch} \\
\midrule
No Scaling & $1,\ 1$ & $76.5_{\pm 0.5}$ & $60.3_{\pm 1.3}$ \\
Linear Avg & $\frac{t-1}{t},\ \frac{1}{t}$ & $77.5_{\pm 0.4}$ & $73.6_{\pm 1.0}$ \\
\textbf{CoDe-LoRA} & $\mathbf{\sqrt{\frac{t-1}{t}},\ \frac{1}{\sqrt{t}}}$ & $\mathbf{77.8_{\pm 0.4}}$ & $\mathbf{75.6_{\pm 0.6}}$ \\
\bottomrule
\end{tabular}
}
\end{table}
Without scaling, the parameter norm explodes, pushing weights outside the LLM's effective optimization manifold. Conversely, linear averaging ($1/t$) overly dilutes the historical knowledge. Only our derived $1/\sqrt{t}$ factor maintains a stable norm and achieves the peak accuracy.

\subsection{Implementation Details}
\label{sec:appendix:implementation}

Following existing CL works~\cite{qin2021lfpt5, wang2023orthogonal, yang2024parameter}, all methods are implemented with instruction tuning~\cite{qin2021lfpt5} and optimized with AdamW~\cite{loshchilov2017decoupled} ($\beta=(0.9, 0.999)$, no weight decay, gradient clipping at 1.0) under a constant learning rate, in bfloat16. Every method trains each task for one epoch with an effective batch size of 64, and inputs are truncated to 512 source and 50 target tokens. The learning rate is set per backbone: $10^{-3}$ for T5-large, and $10^{-4}$ for Llama 2-7B and Qwen3.5-4B. On Qwen3-0.6B the per-task-expert methods (De-LoRA, CoDe-LoRA) use $10^{-3}$, while the methods built on a single shared adapter use $10^{-4}$. For all LoRA-based architectures we set rank $r=8$, $\alpha=32$ and dropout $0.1$, targeting the query and value projections of every Transformer block. Routing uses an LDA classifier on frozen-backbone embeddings estimated from 1000 support samples per task (Appendix~\ref{sec:appendix:router_choice}), the consolidation uses a randomized low-rank SVD with retained rank $8$, and classification tasks are scored over their candidate labels with PMI calibration while TRACE is evaluated by generation.

All experiments use NVIDIA H20 or A100 (80~GB) GPUs; the efficiency measurements in Appendix~\ref{sec:appendix:efficiency_stats} are taken on a single GPU to avoid inter-GPU communication effects. Each experiment is repeated over three independent runs with different random seeds.

\paragraph{Learning rates.}
The learning rate is set per backbone because no single value works for all of them. At $10^{-3}$ the methods built on a single shared adapter (LoRA, CLoRA) diverge on Llama 2-7B and Qwen3.5-4B (gradient norms of 100--340), and on Qwen3-0.6B they forget catastrophically (LoRA on the Long benchmark reaches an AP of only 10.5 with a BWT of $-62$). At $10^{-4}$ the per-task experts of De-LoRA and CoDe-LoRA cannot learn the smallest tasks (CB and COPA, 4--6 optimizer steps per epoch) on Qwen3-0.6B, and De-LoRA falls from 71.2 to 59.5.

\paragraph{Practical Implementation Details of SVD and Null Space Projection.}
To resolve the computational bottleneck of null space projection in large-scale LLMs, we adopt an optimized consolidation strategy. Specifically, we replace full Singular Value Decomposition (SVD) with a randomized low-rank SVD (\texttt{torch.svd\_lowrank}) to efficiently extract singular vectors $U$ from the column space of consolidated parameters. To further reduce complexity, the projection is performed directly on the low-rank factor $B \in \mathbb{R}^{d_{out} \times r}$ rather than the full update $\Delta W \in \mathbb{R}^{d_{out} \times d_{in}}$. The orthogonalized factor is computed as $B_{\text{orth}} = B - U(U^\top B)$, and the resulting update is $\Delta W_{\text{null}} = B_{\text{orth}} A$. This optimization reduces the projection matrix dimensions from $d_{out} \times d_{in}$ to $d_{out} \times r$, where $r \ll d_{in}$, achieving significant speedup while preserving mathematical accuracy. In practice, we set the number of randomized SVD iterations (\texttt{svd\_niter}) to 1 to balance precision and performance.

\paragraph{Computational Complexity and SVD Efficiency Bridge.}
As introduced in Section~\ref{sec:consolidation}, standard SVD on the full weight matrix update $\Delta W \in \mathbb{R}^{d_{out} \times d_{in}}$ requires $\mathcal{O}(d_{out} d_{in} \min(d_{out}, d_{in}))$ operations, which is prohibitively expensive for large language models where $d_{out}, d_{in}$ can scale to thousands or tens of thousands. In contrast, by projecting directly on the low-rank factor $B \in \mathbb{R}^{d_{out} \times r}$ and leveraging the randomized low-rank SVD method (\texttt{torch.svd\_lowrank}), the computational complexity is reduced to $\mathcal{O}(d_{out} r^2)$ where $r$ is the LoRA rank (typically $r \in [8, 32]$). Because $r \ll d_{in}, d_{out}$, this optimization achieves a speedup of several orders of magnitude. In practice, the SVD consolidation is negligible: over a whole T5-large run it takes 0.7 s for the four tasks of the Standard benchmark and 4.1 s for the fifteen tasks of the Long benchmark, while preserving the mathematical guarantee of null space projection. This provides a direct, highly efficient bridge between the theoretical null space projection in Section~\ref{sec:consolidation} and the practical scalability of our framework.

\paragraph{Sensitivity to Approximate Orthogonality.}
While randomized SVD inevitably introduces approximation noise, CoDe-LoRA remains theoretically and architecturally robust to this relaxation. Theoretically, our dynamic scaling factor dampens noise accumulation, bounding the approximation error. Architecturally, the Decoupling Branch serves as a safety net; if approximation noise compromises the shared branch, routing dynamically isolates predictions to task-specific experts. 
To verify this, we vary the SVD retained rank on the Order-1 benchmark (T5-large, three seeds). Table~\ref{tab:svd_sensitivity} reports the accuracy of the consolidation branch alone and of the full model.
\begin{table}[ht]
\caption{SVD retained rank on the Standard CL Benchmark (Order 1, T5-large). Rank 8 is the default.}
\label{tab:svd_sensitivity}
\centering
\small
\begin{tabular}{lcc}
\toprule
\textbf{Retained rank} & \textbf{Co-LoRA alone} & \textbf{CoDe-LoRA} \\
\midrule
1 & 66.84 & 79.13 \\
2 & 72.11 & 79.13 \\
4 & 75.86 & 79.13 \\
8 & 76.67 & 79.13 \\
\bottomrule
\end{tabular}
\end{table}
The consolidation branch alone improves monotonically with the retained rank, since a lower rank discards more of the accumulated update. The full model does not move because the routed expert answers every input, so the shared branch does not enter the prediction. The retained rank therefore matters for the quality of the consolidated weights, not for the routed accuracy.

\paragraph{Step-by-Step Training and Consolidation Workflow.}
During the learning of each sequential task $\mathcal{T}_t$, CoDe-LoRA decouples and consolidates knowledge through a structured step-by-step procedure:
\begin{enumerate}
    \item \textbf{Initialization:} The Consolidation Branch parameters $\theta_{\text{shared}}$ are initialized from the previously consolidated state $W_{\text{acc}}^{(t-1)}$ to enable immediate knowledge reuse and positive transfer, while a new task-specific expert $\theta_{\text{spec}}^{(t)}$ is initialized to zero.
    \item \textbf{Joint Training:} Both branches are optimized jointly on the current task dataset $\mathcal{D}^{(t)}$ using instruction tuning. This joint optimization allows the Consolidation Branch to assimilate shared representation updates while the Decoupling Branch captures task-specific nuances.
    \item \textbf{Null Space Consolidation:} Post-training, the parameter update learned in the Consolidation Branch, denoted as $\Delta W_{\text{shared}}^{(t)} = W_{\text{shared}}^{(t)} - W_{\text{acc}}^{(t-1)}$, is extracted. We project this update onto the null space of the accumulated representation history $W_{\text{acc}}^{(t-1)}$ using randomized low-rank SVD (as detailed above) to eliminate representational interference.
    \item \textbf{Dynamic Scaling \& Archive:} The projected update is scaled via our dynamic scaling coefficient $s_t = 1/\sqrt{t}$ to prevent parameter norm explosion, and merged into the consolidated weight matrix $W_{\text{acc}}^{(t)}$ recursively using $c_t = \sqrt{(t-1)/t}$. Concurrently, the trained task-specific expert $\theta_{\text{spec}}^{(t)}$ is archived in the expert pool, and its prototype $P_t$ is computed and stored.
\end{enumerate}

\paragraph{Evaluation Protocol and Analysis Setup.}
For text classification tasks (e.g., Yahoo, DBpedia), candidate labels are obtained from task metadata and tokenized. The main performance results are reported using the \texttt{generate()} method; classification tasks are additionally scored over their candidate labels via \texttt{forward()}. For the ablation studies and prototype sensitivity analysis in Figure~\ref{fig:ablation}, we vary the number of support samples $N$ to find the optimal balance between routing performance and inference speed.

\subsection{Evaluation Metrics}
\label{sec:appendix:metrics}

We evaluate sequential learning performance using two standard metrics in continual learning:

\paragraph{Average Performance (AP)} represents the overall model capacity across all learned tasks after completing the continual learning sequence. For a sequence of $T$ tasks, it is defined as:
\begin{equation}
    \text{AP} = \frac{1}{T} \sum_{i=1}^{T} a_{i,T}
\end{equation}
where $a_{i,T}$ denotes the evaluation performance (accuracy for classification benchmarks, or task-specific generative metrics like ROUGE/Exact Match for generative benchmarks) on task $i$ after training on the final task $T$. In classification benchmarks, this corresponds to the standard Average Accuracy ($A_T$), while in generative benchmarks (such as TRACE), it is referred to as Overall Performance (OP).

\paragraph{Backward Transfer (BWT)}~\cite{lopez2017gradient} measures the influence of learning subsequent tasks on the performance of previously acquired knowledge:
\begin{equation}
    \text{BWT} = \frac{1}{T-1} \sum_{i=1}^{T-1} (a_{i,T} - a_{i,i})
\end{equation}
where $a_{i,i}$ is the performance on task $i$ immediately after training on task $i$. A negative BWT value indicates catastrophic forgetting (performance degradation), whereas a value closer to 0 or positive indicates stable knowledge retention or positive backward transfer (where learning new tasks improves performance on older tasks). By employing BWT as our unified metric for backward retention across all benchmarks, we avoid the redundancy of using separate forgetting indicators for classification and generative tasks.

\paragraph{Forward Transfer (FWT)} measures the zero-shot accuracy on a task that has not been trained yet. After learning the first $k$ tasks, $\text{FWT}_{k\to j}=a_{j,k}$ for $j>k$, which we compare with the accuracy of the untrained backbone and of the other methods on the same task.

\subsection{Router Choice}
\label{sec:appendix:router_choice}

The main experiments route with a linear discriminant analysis (LDA) classifier on frozen-backbone sentence embeddings. For every task we store the mean embedding and one covariance matrix shared by all tasks, estimated from 1000 support samples per task, and an input goes to the task with the largest discriminant score. No raw sample is stored, so the method stays replay-free and the per-task storage is one vector, as for a prototype, plus a single shared matrix. The cosine prototype router of Section~\ref{sec:routing} is also implemented.

\begin{table}[ht]
\caption{Routing errors and their cost on T5-large (three seeds). Loss is the AP of the true-task expert minus that of the routed expert.}
\label{tab:router_choice}
\centering
\footnotesize
\setlength{\tabcolsep}{3pt}
\begin{tabular}{lcccc}
\toprule
 & \multicolumn{2}{c}{\textbf{Standard (Order 1)}} & \multicolumn{2}{c}{\textbf{Long (Order 4)}} \\
\cmidrule(lr){2-3} \cmidrule(lr){4-5}
\textbf{Router} & Misrouted & Loss & Misrouted & Loss \\
\midrule
Cosine prototype & 2.03\% & 1.43 & 10.07\% & 4.74 \\
LDA & 0.15\% & 0.10 & 1.24\% & 0.20 \\
\bottomrule
\end{tabular}
\end{table}

The cosine router fails by confusing tasks of the same family, because it compares only the direction of the mean embedding and ignores the within-task variance. On Order 4 the largest confusions are MNLI$\to$RTE (22.7\% of MNLI inputs), MNLI$\to$CB (13.0\%), Yahoo$\to$QQP (6.9\%), Yelp$\to$Amazon (5.1\%) and Amazon$\to$IMDB (4.0\%). LDA whitens with the shared within-task covariance and removes most of these errors. Figure~\ref{fig:router_support} varies the number of support samples per task: cosine routing with $N=10$ is still below its plateau and does not close the gap to LDA on the Long benchmark, and mean-centering the embeddings helps on the Standard benchmark but hurts on the Long benchmark. On the Long benchmark LDA already reaches 94.2\% routing accuracy with $N=10$, while the cosine router with $N=100$ reaches 93.3\%; LDA saturates beyond about 500 samples per task.

\begin{figure}[!ht]
  \centering
  \includegraphics[width=\linewidth]{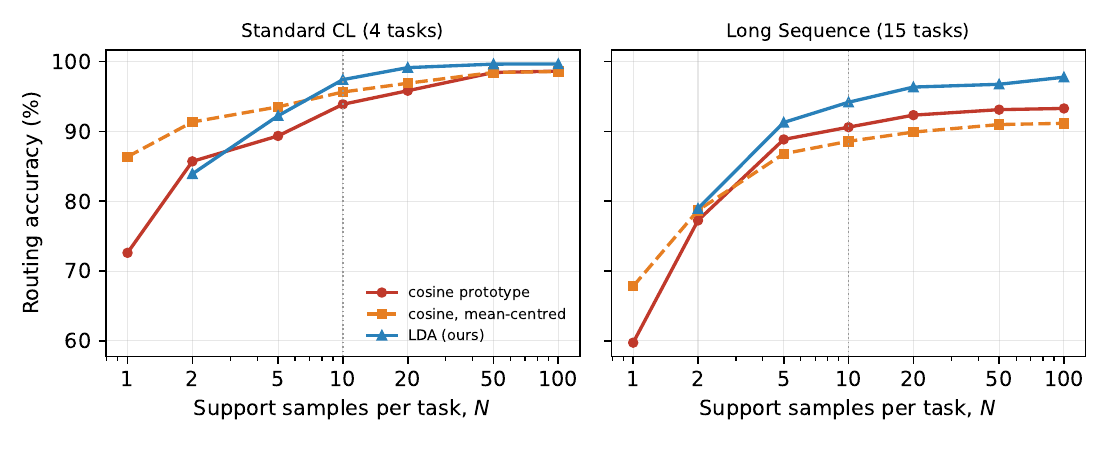}
  \caption{Routing accuracy versus support samples per task $N$ (T5-large).}
  \label{fig:router_support}
\end{figure}

\subsection{Co/De Synergy: Attempts and Limits}
\label{sec:appendix:synergy}

We summarise what we tried to make the consolidation branch (Co) improve on the routed expert (De). All numbers are T5-large, three seeds, AP on the Standard (Order 1) and Long (Order 4) benchmarks, reported as Std / Long; rules are selected on a development subset and reported on test.

\paragraph{The two branches are complementary, but no selection rule exploits it.}
Taking the better of Co and the routed expert per example (the oracle) exceeds the routed expert by 4.7 (Std) to 9.5 (Long) points (Table~\ref{tab:co_constructions}, Figure~\ref{fig:complementarity}). The oracle is a diagnostic of how many examples Co answers correctly while the expert fails, not an attainable bound. Yet no rule chosen on development data captures it. With the LDA router the posterior is almost always above any useful threshold, so Co answers at most 0.3\% of the inputs and CoDe-LoRA reduces to De-LoRA. With the cosine router the fallback does fire, but the routing itself first loses 4.7 points on the Long benchmark, more than Co recovers. The thresholds chosen on development data degenerate to ``never use Co''.

\begin{figure}[!ht]
  \centering
  \includegraphics[width=\linewidth]{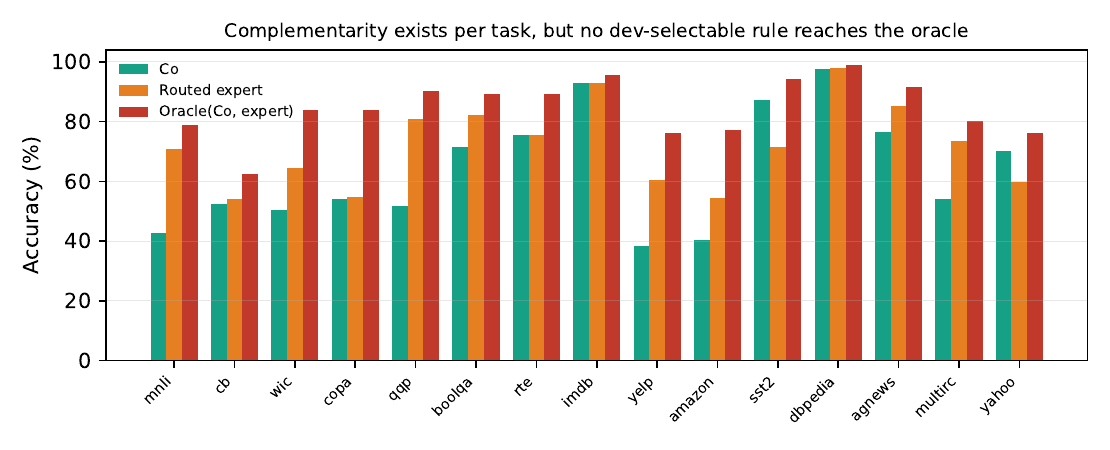}
  \caption{Per-task accuracy of Co, the routed expert and their oracle on the Long benchmark (Order 4).}
  \label{fig:complementarity}
\end{figure}

\paragraph{Inference-time schemes.}
We replayed twelve rounds of schemes on the same checkpoints, the same development subset and the same selection protocol (Table~\ref{tab:inference_schemes}). Confidence-threshold fallbacks, per-expert gating, output selectors and a 1480-feature selector produced no development-qualified rule or gains below one point that did not hold across protocols. Fusing the two branches on candidate-label scores with a per-task weight was the only scheme that was positive across backbones, and it is the one used in the main tables. Without calibration, summing the two branches hurts on the Long benchmark ($-1.0$ to $-1.6$) because the label prior of Co is not removed.

\begin{table}[ht]
\caption{Inference-time schemes for combining Co with the routed expert.}
\label{tab:inference_schemes}
\centering
\small
\begin{tabular}{p{0.38\columnwidth}p{0.52\columnwidth}}
\toprule
\textbf{Scheme family} & \textbf{Outcome} \\
\midrule
Confidence-threshold fallback & best threshold on development is ``never use Co'' \\
Per-expert branch choice & $\approx$ routed expert \\
Co as an extra expert & no development-qualified candidate \\
Output selectors (margin, entropy, agreement) & below calibrated De, or kept De \\
1480-feature selectors & none passed development; 26 rescues vs.\ 26 breaks on test \\
Closed-form combinations (PoE, averaging, Dawid--Skene) & do not solve the selection problem \\
Calibrated label-score fusion (per-task weight) & positive across backbones; used in the main tables \\
\bottomrule
\end{tabular}
\end{table}

\paragraph{Decoding-time mixing.}
Mixing the log-probabilities of Co and the routed expert at every decoding step, $(1-w)\log p_{\text{De}}+w\log p_{\text{Co}}$, does not help either. The weight chosen on development data is $w=0$ on both benchmarks; a per-task weight gives $+0.13$ (Std) and $+0.44$ (Long), 2.8\% and 5.2\% of the oracle gap, at 2.0--2.25$\times$ the decoding cost. The branches agree on the top-1 token in 95\% (Std) and 91\% (Long) of the steps, with log-probability correlations of 0.94 and 0.92, so a mixture with $w=0.3$ changes the argmax in only 1.4--1.6\% of the steps, and the gains and losses cancel. The complementarity exists at the example level but not at the token level.

\paragraph{Constructions of Co.}
Table~\ref{tab:co_constructions} changes only how Co is merged. Projecting against the historical factor space, on one or both sides, or distilling after the SVD, does not improve Co. Merging experts that were trained in isolation from the base model into Co fails, because that shared block is never trained and the generation format collapses (41.87 / 11.06). The null-space merge of Section~\ref{sec:consolidation} gives a weaker Co that is the most complementary to the experts (oracle 84.53 / 86.00).

\begin{table}[ht]
\caption{Constructions of Co (generative, test): Co alone and the oracle that takes the better of Co and the routed expert per example. The routed expert alone scores 79.13 / 76.3--76.5 (Std / Long).}
\label{tab:co_constructions}
\centering
\footnotesize
\setlength{\tabcolsep}{3pt}
\begin{tabular}{lcc}
\toprule
\textbf{Construction} & \textbf{Co alone} & \textbf{Oracle} \\
\midrule
Trained, SVD merge (main) & 76.67 / 66.33 & 83.79 / 84.61 \\
+ null space, paper scaling & 75.61 / 63.78 & 84.53 / 86.00 \\
+ distillation after SVD & 74.77 / 66.34 & 83.66 / 84.51 \\
+ left-space projection & 76.16 / 65.67 & 83.74 / 84.08 \\
+ two-sided projection & 74.88 / 63.91 & 83.14 / 84.60 \\
Isolated experts, SVD merge & 41.87 / 11.06 & 81.84 / 73.99 \\
\quad + left projection & 51.31 / 21.96 & -- \\
\quad + two-sided projection & 58.03 / 46.96 & -- \\
\bottomrule
\end{tabular}
\end{table}

\paragraph{Using Co at training time.}
Initialising each new expert from Co, either by transferring the update or its input subspace, did not pass the development gate across both benchmarks. A family-level initialisation gives $+2.30\pm0.31$ on the Long benchmark (positive in 18 of 18 runs over two independent seed sets), but 93\% of it comes from the single task CB ($+34.1$); without CB and COPA it falls to $+0.08$, and it is no better than a global Co initialisation ($+0.03$). We therefore report it as a local effect.

\paragraph{A measurement pitfall: routing compensation.}
With the cosine router (89.9\% accurate), fusing paper-style Co with the experts improves over the same calibrated single-branch baseline by $+0.49$ (Std, 3/3 seeds) and $+1.60$ (Long, 3/3). Replacing the router by LDA (98.8\% accurate) with Co unchanged reduces these to $+0.13$ (Std, 2/3) and $+0.77$ (Long, 3/3): about half of the apparent gain was the fusion repairing routing errors, not new information from Co, and 94\% of the rest comes from CB and COPA (26 and 40 development examples). When evaluating a two-branch fusion, the routing quality must be fixed and a calibrated single-branch baseline must be reported.

\subsection{Detailed Efficiency Statistics}
\label{sec:appendix:efficiency_stats}

Table~\ref{tab:appendix_efficiency} gives the training and inference seconds of every task on the Standard benchmark (Order 1, T5-large, one H20 GPU, seed 41). Inference is the evaluation stage after the task, and the last stage scores all four tasks. Stored adapter parameters are 9.4M (1.28\%) for O-LoRA, N-LoRA and CoDe-LoRA and 32.4M (4.40\%) for MoLE-CIE; peak GPU memory is 42.6, 42.6, 31.7 and 41.3~GiB for O-LoRA, N-LoRA, MoLE-CIE and CoDe-LoRA.

\begin{table}[ht]
\caption{Training and inference seconds per task on the Standard benchmark (Order 1, T5-large, one H20 GPU, seed 41).}
\label{tab:appendix_efficiency}
\centering
\small
\setlength{\tabcolsep}{4pt}
\begin{tabular}{llcc}
\toprule
\textbf{Method} & \textbf{Task} & \textbf{Train (s)} & \textbf{Infer (s)} \\
\midrule
\multirow{4}{*}{O-LoRA}
& DBpedia & 141 & 25 \\
& Amazon & 65 & 27 \\
& Yahoo & 186 & 43 \\
& AG News & 44 & 117 \\
\midrule
\multirow{4}{*}{N-LoRA}
& DBpedia & 128 & 26 \\
& Amazon & 60 & 27 \\
& Yahoo & 172 & 42 \\
& AG News & 36 & 116 \\
\midrule
\multirow{4}{*}{MoLE-CIE}
& DBpedia & 767 & 128 \\
& Amazon & 626 & 110 \\
& Yahoo & 1484 & 201 \\
& AG News & 587 & 566 \\
\midrule
\multirow{4}{*}{\textbf{CoDe-LoRA}}
& DBpedia & 124 & 39 \\
& Amazon & 60 & 50 \\
& Yahoo & 171 & 84 \\
& AG News & 38 & 128 \\
\bottomrule
\end{tabular}
\end{table}

\paragraph{Long Sequence Efficiency (T=15).}
Table~\ref{tab:long_seq_efficiency} reports the total training and inference minutes and the stored adapter parameters after the fifteen tasks of the Long Sequence Benchmark (Order 4, T5-large, one H20 GPU).
\begin{table}[ht]
\caption{End-to-end efficiency on the Long Sequence Benchmark (T=15 tasks). CoDe-LoRA is a single seed, the others are means over three seeds.}
\label{tab:long_seq_efficiency}
\centering
\small
\setlength{\tabcolsep}{3pt}
\begin{tabular}{lccc}
\toprule
\textbf{Method} & \textbf{Train (min)} & \textbf{Infer (min)} & \textbf{Params} \\
\midrule
O-LoRA & 13.3 & 9.4 & 35.4M (4.80\%) \\
N-LoRA & 12.0 & 9.4 & 35.4M (4.80\%) \\
MoLE-CIE & 140.3 & 49.5 & 60.0M (8.14\%) \\
\textbf{CoDe-LoRA} & 12.5 & 13.4 & 35.4M (4.80\%) \\
\bottomrule
\end{tabular}
\end{table}
Because only the experts are stored, the parameter growth of CoDe-LoRA over the fifteen tasks is the same as for O-LoRA and N-LoRA.

\subsection{Datasets}
\label{sec:appendix:datasets}
\begin{table*}[ht]
\caption{Datasets and their tasks, categories, domains, and metrics.}
\label{apptable:2}
\centering
\begin{tabular}{c|l l l l l }
\hline
\textbf{No.} & \textbf{Dataset name} & \textbf{Category} & \textbf{Task} & \textbf{Domain} & \textbf{Metric} \\
\hline
1 & Yelp & CL Benchmark & sentiment analysis & Yelp reviews & accuracy \\
2 & Amazon & CL Benchmark & sentiment analysis & Amazon reviews & accuracy \\
3 & DBpedia & CL Benchmark & topic classification & Wikipedia & accuracy \\
4 & Yahoo & CL Benchmark & topic classification & Yahoo Q\&A & accuracy \\
5 & AG News & CL Benchmark & topic classification & news & accuracy \\
6 & MNLI & GLUE & NLI & various & accuracy \\
7 & QQP & GLUE & paragraph detection & Quora & accuracy \\
8 & RTE & GLUE & NLI & news, Wikipedia & accuracy \\
9 & SST-2 & GLUE & sentiment analysis & movie reviews & accuracy \\
10 & WiC & SuperGLUE & word sense disambiguation & lexical databases & accuracy \\
11 & CB & SuperGLUE & NLI & various & accuracy \\
12 & COPA & SuperGLUE & QA & blogs, encyclopedia & accuracy \\
13 & BoolQA & SuperGLUE & boolean QA & Wikipedia & accuracy \\
14 & MultiRC & SuperGLUE & QA & various & accuracy \\
15 & IMDB & SuperGLUE & sentiment analysis & movie reviews & accuracy \\
\hline
\end{tabular}
\end{table*}

In our experiments, we utilize 15 datasets from standard classification and NLU benchmarks (as detailed in Table \ref{apptable:2}), as well as the generation-heavy TRACE benchmark~\cite{wang2023trace}. Specifically, the classification datasets are sourced from well-established suites such as the CL benchmark~\cite{zhang2015character}, GLUE~\cite{wang2018glue}, and SuperGLUE~\cite{wang2019superglue}, as well as the IMDB movie review dataset~\cite{maas2011learning}. These cover natural language inference (NLI), sentiment classification (SC), and topic classification (TC). 

Crucially, to evaluate generalization on more complex, generation-heavy tasks, we also incorporate the TRACE benchmark~\cite{wang2023trace}. Unlike standard classification tasks, TRACE features multi-turn, complex generative tasks that assess the structural reasoning and domain adaptation of LLMs. In particular, we evaluate on code generation (completion) using the Py150 dataset, where even a single syntax error can lead to a compilation failure, making it highly fragile under sequential fine-tuning.

\subsection{Task Sequence and Instructions}
\label{sec:appendix:sequences}

To evaluate the robustness of our method, we employ a diverse set of task instructions and evaluate across multiple task orders. Table~\ref{apptable:3} summarizes the specific natural language prompts used for each task category, following the instruction-tuning format described in \citet{qin2021lfpt5}. The task sequences for the T5 and Llama 2-7B experiments, together with the TRACE benchmark sequence, are provided in Table~\ref{apptable:4}.

\begin{table}[!htbp]
\caption{Task descriptions and corresponding prompts.}
\label{apptable:3}
\centering
\footnotesize
\setlength{\tabcolsep}{4pt}
\renewcommand{\arraystretch}{0.95}
\begin{tabularx}{\columnwidth}{@{}lX@{}}
\toprule
\textbf{Task} & \textbf{Prompts} \\
\midrule
NLI & What is the logical relationship between the ``sentence 1'' and the ``sentence 2''? Choose one from the option. \\
QQP & Whether the ``first sentence'' and the ``second sentence'' have the same meaning? Choose one from the option. \\
SC & What is the sentiment of the following paragraph? Choose one from the option. \\
TC & What is the topic of the following paragraph? Choose one from the option. \\
BoolQA & According to the following passage, is the question true or false? Choose one from the option. \\
MultiRC & According to the following passage and question, is the candidate answer true or false? Choose one from the option. \\
WiC & Given a word and two sentences, whether the word is used with the same sense in both sentences? Choose one from the option. \\
\bottomrule
\end{tabularx}
\end{table}

\begin{table}[!htbp]
\caption{Task sequences for different model and benchmark settings.}
\label{apptable:4}
\centering
\scriptsize
\setlength{\tabcolsep}{3pt}
\renewcommand{\arraystretch}{0.95}
\begin{tabularx}{\columnwidth}{@{}cl>{\raggedright\arraybackslash}X@{}}
\toprule
\textbf{Order} & \textbf{Setting} & \textbf{Task Sequence} \\
\midrule
1 & T5, Llama 2-7B & dbpedia $\rightarrow$ amazon $\rightarrow$ yahoo $\rightarrow$ ag \\
2 & T5, Llama 2-7B & dbpedia $\rightarrow$ amazon $\rightarrow$ ag $\rightarrow$ yahoo \\
3 & T5, Llama 2-7B & yahoo $\rightarrow$ amazon $\rightarrow$ ag $\rightarrow$ dbpedia \\
\midrule
4 & T5 & mnli $\rightarrow$ cb $\rightarrow$ wic $\rightarrow$ copa $\rightarrow$ qqp $\rightarrow$ boolqa $\rightarrow$ rte $\rightarrow$ imdb $\rightarrow$ yelp $\rightarrow$ amazon $\rightarrow$ sst-2 $\rightarrow$ dbpedia $\rightarrow$ ag $\rightarrow$ multirc $\rightarrow$ yahoo \\
5 & T5 & multirc $\rightarrow$ boolqa $\rightarrow$ wic $\rightarrow$ mnli $\rightarrow$ cb $\rightarrow$ copa $\rightarrow$ qqp $\rightarrow$ rte $\rightarrow$ imdb $\rightarrow$ sst-2 $\rightarrow$ dbpedia $\rightarrow$ ag $\rightarrow$ yelp $\rightarrow$ amazon $\rightarrow$ yahoo \\
6 & T5 & yelp $\rightarrow$ amazon $\rightarrow$ mnli $\rightarrow$ cb $\rightarrow$ copa $\rightarrow$ qqp $\rightarrow$ rte $\rightarrow$ imdb $\rightarrow$ sst-2 $\rightarrow$ dbpedia $\rightarrow$ ag $\rightarrow$ yahoo $\rightarrow$ multirc $\rightarrow$ boolqa $\rightarrow$ wic \\
\midrule
1 & TRACE & C-STANCE $\rightarrow$ FOMC $\rightarrow$ MeetingBank $\rightarrow$ Py150 $\rightarrow$ ScienceQA $\rightarrow$ NumGLUE-cm $\rightarrow$ NumGLUE-ds $\rightarrow$ 20Minuten \\
\bottomrule
\end{tabularx}
\end{table}

\end{document}